\newif\ifsup\suptrue
\tikzset{
    %Define standard arrow tip
    >=stealth',
    %Define style for boxes
    observed/.style={
           circle,
           rounded corners,
           draw=black, thick,
           minimum width=2.2em,
           minimum height=2.2em,
           font=\tiny,
           text centered,
           scale=.8,
           fill=blue!20!white},
     latent/.style={
           circle,
           rounded corners,
           draw=black, thick, dashed,
           minimum width=.5em,
           minimum height=.5em,
           font=\footnotesize,
           text centered,
           fill=black!10!white
           },
     empty/.style={
           circle,
           rounded corners,
           minimum width=.5em,
           minimum height=.5em,
           font=\footnotesize,
           text centered,
           },
    % Define arrow style
    pil/.style={
           o->,
           thick,
           shorten <=2pt,
           shorten >=2pt,},
    sh/.style={ shade, shading=axis, left color=red, right color=green,
    shading angle=45 }  
}
\newcommand{\E}[1]{\mathbb E\left[#1\right]}
\newcommand{\R}{\mathbb R}
\newcommand{\Var}{\operatorname{Var}}
\newcommand{\set}[1]{\left\{#1\right\}}
\newcommand{\ind}[1]{\mathds{1}\!\!\set{#1}}
\newcommand{\argmax}{\operatornamewithlimits{arg\,max}}
\newcommand{\argmin}{\operatornamewithlimits{arg\,min}}
\newcommand{\eqn}[1]{\begin{align}#1\end{align}}
\newcommand{\eq}[1]{\begin{align*}#1\end{align*}}
\renewcommand{\P}[1]{\operatorname{P}\left\{#1\right\}}
\newcommand{\Pri}[1]{\operatorname{P}_i\left\{#1\right\}}
\newcommand{\Prz}[1]{\operatorname{P}_0\left\{#1\right\}}
\newcommand{\bigo}[1]{\mathcal{O}\left( #1 \right)}
\newcommand{\KL}{\operatorname{KL}}
\newcommand{\simpleregret}{R_T}
\newcommand{\Q}[1]{\operatorname{Q}\left\{#1\right\}}
\newcommand{\EE}{\mathbb E}
\newcommand{\EEa}{\EE_a}
\newcommand{\Pns}[2]{\operatorname{P}_{#1}\left\{#2\right\}}
\newcommand{\Pn}[2]{\operatorname{P}\left\{#2|#1\right\}}
\newcommand{\parents}[1]{\operatorname{\mathcal{P}a}_{#1}}
\newcommand{\actions}{\mathcal{A}}
\newcommand{\calA}{\mathcal A}
\newcommand{\ie}{\textit{i.e.}}
\newcommand{\eg}{\textit{e.g.}}
\newcommand{\Ps}{\operatorname{P}}
\theoremstyle{plain}
\newtheorem{theorem}{Theorem}
\newtheorem{proposition}[theorem]{Proposition}
\newtheorem{lemma}[theorem]{Lemma}
\theoremstyle{definition}
\newtheorem{definition}[theorem]{Definition}
\newtheorem{remark}[theorem]{Remark}
\let\epsilon\varepsilon
\title{Causal Bandits: Learning Good Interventions via Causal Inference}
\author{
  Finnian Lattimore \\
  Australian National University and Data61/NICTA \\
  \texttt{finn.lattimore@gmail.com} \\
   \And
   Tor Lattimore \\
   University of Alberta \\
   \texttt{tor.lattimore@gmail.com} \\
   \And
   Mark D. Reid \\
   Australian National University and Data61/NICTA \\
   \texttt{mark.reid@anu.edu.au} \\
}
\begin{document}

\maketitle

\begin{abstract} 
We study the problem of using causal models to improve the rate at which good interventions can be learned online in a stochastic environment. 
Our formalism combines multi-arm bandits and causal inference to model a novel type of bandit feedback that is not exploited by existing approaches.
We propose a new algorithm that exploits the causal feedback and prove a bound on its simple regret that is strictly better (in all quantities) 
than algorithms that do not use the additional causal information.
\end{abstract} 
%\begin{tikzpicture}[background rectangle/.style={fill=olive!45}, show background rectangle]

%%%%%%%%%%%%%%%%%%%%%%%%%%%%%%%%%%%%%%%%%%%%%%%%%
% INTRODUCTION
%%%%%%%%%%%%%%%%%%%%%%%%%%%%%%%%%%%%%%%%%%%%%%%%%

\section{Introduction}
\label{sec:intro}
Medical drug testing, policy setting, and other scientific processes are commonly framed and analysed in the language of sequential experimental design and, in special cases, as bandit problems~\citep{Robbins1952,Chernoff1959}. 
In this framework, single actions (also referred to as interventions) from a pre-determined set are repeatedly performed in 
order to evaluate their effectiveness via feedback from a single, real-valued reward signal.
We propose a generalisation of the standard model by assuming that, in addition to the reward signal, the learner observes the values of a number of covariates 
drawn from a probabilistic causal model~\citep{Pearl2000}.
Causal models are commonly used in disciplines where explicit experimentation may be difficult such as social science, demography and economics.
For example, when predicting the effect of changes to childcare subsidies on workforce participation, or school choice on grades. 
Results from causal inference relate observational distributions to interventional ones, allowing the outcome of an intervention to be predicted without
explicitly performing it.
By exploiting the causal information we show, theoretically and empirically, how non-interventional observations can be used to improve the rate at 
which high-reward actions can be identified.

The type of problem we are concerned with is best illustrated with an example. 
Consider a farmer wishing to optimise the yield of her crop. 
She knows that crop yield is only affected by temperature, a particular soil nutrient, and moisture level but the precise effect of their combination is unknown.
In each season the farmer has enough time and money to intervene and control at most one of these variables:
deploying shade or heat lamps will set the temperature to be low or high; the nutrient can be added or removed through a choice of fertilizer; and irrigation or rain-proof covers will keep the soil wet or dry.
When not intervened upon, the temperature, soil, and moisture vary naturally from season to season due to weather conditions and these are all observed along with the final crop yield at the end of each season.
How might the farmer best experiment to identify the single, highest yielding intervention in a limited number of seasons?
%without sacrificing too much crop yield (relative to always choosing the best intervention) in the process?
\paragraph{Contributions} We take the first step towards formalising and solving problems such as the one above. 
In \S\ref{sec:defs} we formally introduce \emph{causal bandit problems} in which interventions are treated as arms in a bandit problem but their influence on the reward --- along with any other observations --- is assumed to conform to a known causal graph. 
We show that our causal bandit framework subsumes the classical bandits (no additional observations) and contextual stochastic bandit problems (observations are revealed before an intervention is chosen) before focusing on the case where, like the above example, observations occur \emph{after} each intervention is made.

Our focus is on the simple regret, which measures the difference between the return of the optimal action and that of the action chosen by the algorithm after $T$ rounds.
In \S\ref{sec:simple-regret} we analyse a specific family of causal bandit problems that we call \emph{parallel bandit} problems in which $N$ factors affect the reward independently and there are $2N$ possible interventions.
We propose a simple causal best arm identification algorithm for this problem and show that up to logarithmic factors it enjoys minimax optimal
simple regret guarantees of $\smash{\tilde\Theta(\sqrt{m/T})}$ where $m$ depends on the causal model and may be much smaller than $N$.
In contrast, existing best arm identification algorithms suffer $\smash{\Omega(\sqrt{N/T})}$ simple regret (Thm. 4 by \citet{audibert2010best}).
This shows theoretically the value of our framework over the traditional bandit problem. 
Experiments in \S\ref{sec:experiments} further demonstrate the value of causal models in this framework.

In the general casual bandit problem interventions and observations may have a complex relationship. 
In \S\ref{sec:simple-regret-general} we propose a new algorithm inspired by importance-sampling that a) enjoys sub-linear regret equivalent 
to the optimal rate in the parallel bandit setting and b) captures many of the intricacies of sharing information in a causal graph in the general case.
As in the parallel bandit case, the regret guarantee scales like $\smash{O(\sqrt{m/T})}$ where $m$ depends on the underlying causal structure, with 
smaller values corresponding to structures that are easier to learn. The value of $m$ is always less than the number of interventions $N$ and in the special
case of the parallel bandit (where we have lower bounds) the notions are equivalent.

\paragraph{Related Work} As alluded to above, causal bandit problems can be treated as classical multi-armed bandit problems by simply ignoring the causal model and extra observations and applying an existing best-arm identification algorithm with well understood simple regret guarantees \citep{Jamieson2013}. However, as we show in \S\ref{sec:simple-regret}, ignoring the extra information available in the non-intervened variables yields sub-optimal performance.

A well-studied class of bandit problems with side information are ``contextual bandits''~\cite{Langford2008,Agarwal2014}. Our framework bears a superficial similarity to contextual bandit problems since the extra observations on non-intervened variables might be viewed as context for selecting an intervention. 
However, a crucial difference is that in our model the extra observations are only revealed \emph{after} selecting an intervention and hence cannot be used as context. 

There have been several proposals for bandit problems where extra feedback is received after an action is taken.
Most recently, \citet{Alon2015}, \citet{Kocak2014} have considered very general models related to partial monitoring games~\citep{Bartok2014} where rewards on unplayed actions are revealed according to a feedback graph. As we discuss in \S\ref{sec:discussion}, the parallel bandit problem can be captured in this framework, however the regret bounds are not optimal in our setting. They also focus on cumulative regret, which cannot be used to guarantee low simple regret~\citep{Bubeck2009a}. The partial monitoring approach taken by \cite{wu2015online} could be applied (up to modifications for the simple regret) to the parallel bandit, but the resulting strategy would need to know the likelihood of each factor in advance, while our strategy learns this online. \citet{Yu2009} utilize extra observations to detect changes in the reward distribution, whereas we assume fixed reward distributions and use extra observations to improve arm selection. \citet{Avner2012} analyse bandit problems where the choice of arm to pull and arm to receive feedback on are decoupled. The main difference from our present work is our focus on simple regret and the more complex information linking rewards for different arms via causal graphs. To the best of our knowledge, our paper is the first to analyse simple regret in bandit problems with extra post-action feedback.

%\citet{Yu2009} consider bandit problems where a learner chooses an arm to play as well as set of arms to observe rewards for in a stochastic setting where the reward distributions can change infrequently and the aim is to minimize cumulative regret. They use extra observations to detect changes whereas we assume fixed reward distributions and use extra observations to improve arm selection.
% Partial monitoring is a very general framework for for decoupling the feedback from the action and reward. It can be used to classify problems into one of four categories, trivial with no regret, easy with $R_T = \bigthetatilde{\sqrt{T}}$ , hard with $R_T = \bigtheta{T^{2/3}}$ and hopeless with $R_T = \bigomega{T}$ \cite{Bartok2014}. Partial monitoring algorithms yield results that are optimal with respect the horizon $T$ but not other parameters, such as $K$, which is the key focus of incorporating causal structure. 

Two pieces of recent work also consider applying ideas from causal inference to bandit problems.
\citet{Bareinboim2015} demonstrate that in the presence of confounding variables the value that a variable would have taken had it not been 
intervened on can provide important contextual information. Their work differs in many ways. For example, the focus is on the cumulative regret and
the context is observed before the action is taken and cannot be controlled by the learning agent.  

%\todof{write one sentence to explain how their work is different to ours}

 \citet{Ortega2014thompson} present an analysis and extension of Thompson sampling assuming actions are causal interventions. Their focus is on causal induction (\ie, learning an unknown causal model) instead of exploiting a known causal model. Combining their handling of  causal induction with our analysis is left as future work.
% Key to Elias' paper is: observing the action an agent would take if it were allowed to make its natural choice can provide some information about hidden confounders that influence both the reward and the choice of action. Therefore, incorporating an agents natural choice as context may outperform a standard bandit that does not use that context. (Note: even in the presence of hidden confounders, including the agents natural choice as context only may improve the results. It is easy to come up with a counter example in which it does not).

The truncated importance weighted estimators used in \S\ref{sec:simple-regret-general} have been studied before in a causal framework by \citet{BJQ13}, 
where the focus is on learning from observational data, but not controlling the sampling process. They also briefly discuss some of the issues 
encountered in sequential design, but do not give an algorithm or theoretical results for this case.

%%%%%%%%%%%%%%%%%%%%%%%%%%%%%%%%%%%%%%%%%%%%%%%%%
% PROBLEM SETUP
%%%%%%%%%%%%%%%%%%%%%%%%%%%%%%%%%%%%%%%%%%%%%%%%%
\section{Problem Setup}
\label{sec:defs}
\newcommand{\bernoulli}{\operatorname{Bernoulli}}
\newcommand{\dirac}{\operatorname{Dirac}}
\renewcommand{\vec}[1]{\boldsymbol{#1}}

We now introduce a novel class of stochastic sequential decision problems which we call \emph{causal bandit problems}. 
In these problems, rewards are given for repeated interventions on a fixed causal model~\cite{Pearl2000}. 
Following the terminology and notation in~\cite{Koller2009}, a \emph{causal model} is given by a directed acyclic graph $\mathcal{G}$ over a set of random variables $\mathcal{X} = \{ X_1, \ldots, X_N \}$ and a joint distribution $\mathrm{P}$ over $\mathcal{X}$ that factorises over $\mathcal{G}$.
We will assume each variable only takes on a finite number of distinct values.
An edge from variable $X_i$ to $X_j$ is interpreted to mean that a change in the value of $X_i$ may directly cause a change to the value of $X_j$.
The \emph{parents} of a variable $X_i$, denoted $\parents{X_i}$, is the set of all variables $X_j$ such that there is an edge from $X_j$ to $X_i$ in $\mathcal{G}$.
An \emph{intervention or action (of size $n$)}, denoted $do(\vec{X}=\vec{x})$, assigns the values $\vec{x}=\{x_1, \ldots, x_n\}$ to the corresponding variables $\vec{X}=\{X_1, \ldots, X_n\} \subset \mathcal{X}$ with the empty intervention (where no variable is set) denoted $do()$.
The intervention also ``mutilates'' the graph $\mathcal{G}$ by removing all edges from $\parents{i}$ to $X_i$ for each $X_i \in \vec{X}$. 
The resulting graph defines a probability distribution $\P{\vec{X}^c | do(\vec{X}=\vec{x})}$ over $\vec{X}^c := \mathcal{X} - \vec{X}$. 
Details can be found in Chapter 21 of~\cite{Koller2009}.

A learner for a casual bandit problem is given the casual model's graph $\mathcal{G}$ and a set of \emph{allowed actions} $\mathcal{A}$.
One variable $Y \in \mathcal{X}$ is designated as the \emph{reward variable} and takes on values in $\{0, 1\}$.
We denote the expected reward for the action $a = do(\vec{X} = \vec{x})$ by $\mu_{a} := \E{Y | do(\vec{X} = \vec{x})}$ and 
the optimal expected reward by $\mu^* := \max_{a\in\actions} \mu_{a}$. 
The causal bandit game proceeds over $T$ rounds.
In round $t$, the learner \emph{intervenes} by choosing $a_t = do(\vec{X}_t = \vec{x}_t) \in \mathcal{A}$ based on previous observations. 
It then \emph{observes} sampled values for all non-intervened variables $\vec{X}^c_t$ drawn from $\P{\vec{X}^c_t | do(\vec{X}_t = \vec{x}_t)}$, 
including the \emph{reward} $Y_t \in \{0,1\}$. 
After $T$ observations the learner outputs an estimate of the optimal action $\hat a^*_T \in \actions$ based on its prior observations.

The objective of the learner is to minimise the simple regret $\simpleregret = \mu^* - \E{\mu_{\hat a^*_T}}.$ This is sometimes refered to as a ``pure exploration''~\citep{Bubeck2009a} or ``best-arm identification'' problem~\citep{Gabillon2012a} and is most appropriate when, as in drug and policy testing, the learner has a fixed experimental budget after which its policy will be fixed indefinitely. 

Although we will focus on the intervene-then-observe ordering of events within each round, other scenarios are possible. If the non-intervened variables are observed before an intervention is selected our framework reduces to stochastic contextual bandits, which are already reasonably well understood~\citep{Agarwal2014}. Even if no observations are made during the rounds, the causal model may still allow offline pruning of the set of allowable interventions thereby reducing the complexity.

We note that classical $K$-armed stochastic bandit problem can be recovered in our framework by considering a simple causal model with one edge connecting a single variable $X$ that can take on $K$ values to a reward variable $Y \in \set{0,1}$ where $\P{Y = 1|X} = r(X)$ for some arbitrary but unknown, real-valued function $r$. The set of allowed actions in this case is $\mathcal{A} = \{ do(X = k) \colon k \in \{1, \ldots, K\}\}$. Conversely, any causal bandit problem can be reduced to a classical stochastic $|\mathcal{A}|$-armed bandit problem by treating each possible intervention as an independent arm and ignoring all sampled values for the observed variables except for the reward. Intuitively though, one would expect to perform better by making use of the extra structure and observations.

\section{Regret Bounds for Parallel Bandit}
\label{sec:simple-regret}
In this section we propose and analyse an algorithm for achieving the optimal regret in a natural special 
case of the causal bandit problem which we call the {\it parallel bandit}.
It is simple enough to admit a thorough analysis but rich enough to model the type of problem discussed in \S\ref{sec:intro}, including the farming example. 
It also suffices to witness the regret gap between algorithms that make use of causal models and those which do not.

The causal model for this class of problems has $N$ binary variables $\{ X_1, \ldots, X_N \}$ where each $X_i \in \{0,1\}$ are independent causes of a 
reward variable $Y \in \set{0,1}$, as shown in Figure~\ref{fig:parallel}.
All variables are observable and the set of allowable actions are all size 0 and size 1 interventions: $\mathcal{A} = \set{do()} \cup \set{ do(X_i = j) \colon 1 \leq i \leq N \text{ and } j \in \set{0,1}}$
In the farming example from the introduction, $X_1$ might represent temperature (\eg, $X_1=0$ for low and $X_1=1$ for high). 
The interventions $do(X_1 = 0)$ and $do(X_1 = 1)$ indicate the use of shades or heat lamps to keep the temperature low or high, respectively.

\begin{figure}
    \begin{subfigure}[b]{0.34\textwidth}
	\centering    
          \begin{tikzpicture}[->,>=stealth',shorten >=1pt,auto,node distance=.45cm,
  thick,main node/.style={observed}, hidden/.style={empty},background rectangle/.style={fill=olive!45}]
%every node/.style={scale=0.6}
 %nodes
\node[main node](1){$X_{1}$};
\node[main node, right=of 1](2){$X_{2}$};
\node[hidden, right=of 2](3){$...$};
\node[main node, right=of 3](4){$X_{N}$};
\node[main node, below right=of 2](5){$Y$};
 \path[every node/.style={font=\tiny}]
    (1) edge (5)
    	(2) edge (5)
    (4) edge (5);
\end{tikzpicture}
        \caption{Parallel graph}
        \label{fig:parallel}
    \end{subfigure}
    \begin{subfigure}[b]{0.2\textwidth}
    \centering
        \begin{tikzpicture}[->,>=stealth',shorten >=1pt,auto,node distance=.45cm,
  thick,main node/.style={observed}, hidden/.style={empty},background rectangle/.style={fill=olive!45}]
\node[main node](1){$X_1$};
\node[main node, below left=of 1](2){$X_2$};
\node[main node, below right=of 1](4){$Y$};
 \path[every node/.style={font=\tiny}]
    (1) edge (2)
    (1) edge (4)
    (2) edge (4);
\end{tikzpicture}
        \caption{Confounded graph}
        \label{fig:causalStructure_confounded}
    \end{subfigure}
    \begin{subfigure}[b]{0.4\textwidth}
    \centering
         \begin{tikzpicture}[->,>=stealth',shorten >=1pt,auto,node distance=.45cm,
  thick,main node/.style={observed}, hidden/.style={empty},background rectangle/.style={fill=olive!45}]
\node[main node](1){$X_{1}$};
\node[main node, right=of 1](2){$X_{2}$};
\node[hidden, right=of 2](3){$...$};
\node[main node, right=of 3](4){$X_{N}$};
\node[main node, right=of 4](5){$Y$};
 \path[every node/.style={font=\tiny}]
    (1) edge (2)
  	(2) edge (3)
    (3) edge (4)
    (4) edge (5);
\end{tikzpicture}
        \caption{Chain graph}
        \label{fig:causalchain}
    \end{subfigure}
    \caption{Causal Models}\label{fig:causalmodels}
\end{figure}
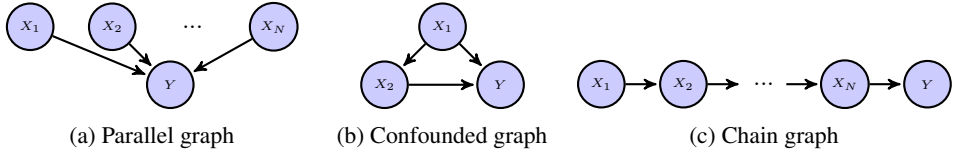

In each round the learner either purely observes by selecting $do()$ or sets the value of a single variable. The remaining variables are simultaneously set by independently biased coin flips. 
The value of all variables are then used to determine the distribution of rewards for that round.
Formally, when not intervened upon we assume that each $X_i \sim \bernoulli(q_i)$ where $\vec{q} = (q_1, \ldots, q_N) \in [0,1]^N$ so that $q_i = \P{X_i = 1}$.
%\todom{Should $Y$ be 0-1 valued or real everywhere?} YES!
The value of the reward variable is distributed as $\P{Y = 1|\vec{X}} = r(\vec{X})$ where 
$r : \{0,1\}^N \to [0,1]$ is an arbitrary, fixed, and unknown function. 
In the farming example, this choice of $Y$ models the success or failure of a seasons crop, which depends stochastically on the various environment variables.
%Note that the expected reward of the optimal intervention is at least as large as the expected reward for doing nothing.

%\todom{$i,j$ notation to $a \in \actions$}
%In our analysis below, we will often use the pair $(i,j)$ for $i \in \{1, \ldots, N\}$ and $j \in \{0,1\}$ as a shorthand for the intervention $do(X_i = j)$.
%We will also make use of the expansion of the expected reward for $do(X_i = j)$:
%\eq{
%\mu_{i,j} 
%&= \E{r(X)|do(X_i = j)} \\
%&= \sum_{\boldsymbol{x} \in \set{0,1}^N : x_i = j} r(\boldsymbol{x})  \prod_{k \neq i} q_k^{x_k} (1 - q_k)^{1-x_k}\,.  
%}
%The sum and product use the restrictions $x_i = j$ and $k \ne i$ because the intervention forces $X_i = j$ with probability 1.
%The optimal intervention is denoted $(i^*,j^*) = \argmax_{i,j} \mu_{i,j}$ and the corresponding optimal reward is $\mu^* = \mu_{i^*,j^*}$. 

\paragraph{The Parallel Bandit Algorithm}
%\label{sub:par-bandit-alg}

The algorithm operates as follows. For the first $T/2$ rounds it chooses $do()$ to collect observational data. As the only link from each $X_1,\ldots,X_N$ to $Y$ is a direct, causal one, $\P{Y|do(X_i=j)}=\P{Y|X_i=j}$. Thus we can create good estimators for the returns of the actions $do(X_i = j)$ for which $\P{X_i = j}$ is large. The actions for which $\P{X_i = j}$ is small may not be observed (often) so  estimates of their returns could be poor. To address this, the remaining $T/2$ rounds are evenly split to estimate the rewards for these infrequently observed actions. The difficulty of the problem depends on $\vec{q}$ and, in particular, how many of the variables are unbalanced (\ie, small $q_i$ or $(1-q_i)$). For $\tau \in [2...N]$ let $I_\tau = \set{ i : \min\set{q_i, 1-q_i} < \frac{1}{\tau}}$. Define
\eq{
\label{eq:m-simple}
m(\vec{q}) = \min \set{ \tau : |I_{\tau}| \leq \tau}\,.
% m(\vec{q}) = \max \set{i : \sum_{k=1}^N \ind{\min\set{q_k, \,1 - q_k} \leq 1/i} \leq i}\,.
}
%Let $I_1 = \emptyset$ and for $\tau \in [0,1)$ let $I_\tau = \set{ i : \min\set{q_i, 1-q_i} \leq \tau}$.

\begin{wrapfigure}[18]{r}{0.6\textwidth}
\vspace{-25pt}
\begin{minipage}{.6\textwidth}
\begin{algorithm}[H]
\caption{Parallel Bandit Algorithm}\label{alg:simple}
\begin{algorithmic}[1]
\STATE {\bf Input:} Total rounds $T$ and $N$.
\FOR{$t \in 1,\ldots,T / 2$}
\STATE Perform empty intervention $do()$
\STATE Observe $\vec{X}_t$ and $Y_t$
\ENDFOR
\FOR{$a = do(X_i = x) \in \actions$}
\STATE Count times $X_i = x$ seen: $T_a = \sum_{t=1}^{T/2} \ind{X_{t,i} = x}$
\STATE Estimate reward: $\hat{\mu}_a = \frac{1}{T_a} \sum_{t=1}^{T/2} \ind{X_{t,i} = x} Y_t$ \\[0.2cm]
\STATE Estimate probabilities: $\hat{p}_a = \frac{2 T_a}{T}$,\,\, $\hat q_i = \hat p_{do(X_i = 1)}$
\ENDFOR
\STATE Compute $\hat{m} = m(\vec{\hat q})$ and $A = \set{a \in \actions \colon \hat{p}_a \leq \frac{1}{\hat m}}$.
\STATE Let $T_A := \frac{T}{2 |A|}$ be times to sample each $a\in A$.
\FOR{$a = do(X_i = x) \in A$}
\FOR{$t \in 1,\ldots,T_A$}
\STATE Intervene with $a$ and observe $Y_t$
\ENDFOR
\STATE Re-estimate $\hat{\mu}_a = \frac{1}{T_A} \sum_{t=1}^{T_A} Y_t$
\ENDFOR
\RETURN estimated optimal $\hat{a}^*_T \in \argmax_{a\in\actions} \hat{\mu}_a$
\end{algorithmic}
\end{algorithm}
\end{minipage}
\end{wrapfigure}

$I_\tau$ is the set of variables considered unbalanced and we tune $\tau$ to trade off identifying the low probability actions against not having too many of them, so as to minimize the worst-case simple regret. When $\vec{q} = (\frac{1}{2}, \ldots, \frac{1}{2})$ we have $m(\vec{q}) = 2$ and when $\vec{q} = (0, \ldots, 0)$ we have $m(\vec{q}) = N$. We do not assume that $\vec{q}$ is known, thus Algorithm \ref{alg:simple} also utilizes the samples captured during the observational phase to estimate $m(\vec{q})$. Although very simple, the following two theorems show that this algorithm is effectively optimal.

\begin{theorem}\label{thm:uq-simple}
Algorithm \ref{alg:simple} satisfies
\eq{
\simpleregret \in \bigo{\sqrt{\frac{m(\vec{q})}{T}\log\left(\frac{NT}{m}\right)}}\,.
}
\end{theorem}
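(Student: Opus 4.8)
The plan is to reduce the simple regret to a uniform control on the estimation error $|\hat\mu_a - \mu_a|$ over all actions, and then to show that on a high-probability event every action is estimated to within $\epsilon = \bigo{\sqrt{(m/T)\log(NT/m)}}$. The reduction is standard: writing $a^\ast = \argmax_a \mu_a$ and $\hat a^\ast_T = \argmax_a \hat\mu_a$, if $|\hat\mu_a - \mu_a| \le \epsilon$ for all $a$ then $\mu_{\hat a^\ast_T} \ge \hat\mu_{\hat a^\ast_T} - \epsilon \ge \hat\mu_{a^\ast} - \epsilon \ge \mu^\ast - 2\epsilon$. Since rewards lie in $[0,1]$, if this good event holds with probability $\ge 1-\delta$ then $\simpleregret \le 2\epsilon + \delta$, and choosing $\delta = m/T$ makes the second term lower order. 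It therefore suffices to (i) show each action is effectively sampled $\Omega(T/m)$ times, and (ii) apply a Hoeffding bound with a union over the $2N+1$ actions so each confidence radius is $\bigo{\sqrt{(m/T)\log(N/\delta)}} = \bigo{\sqrt{(m/T)\log(NT/m)}}$. We may assume $T \gtrsim m\log(NT/m)$ throughout, since otherwise the claimed bound exceeds $1$ and holds trivially.

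The heart of the argument is controlling the data-dependent quantities $\hat m = m(\vec{\hat q})$ and the resampling set $A$. I would prove two facts on the good event. First, $|A| \le \hat m$ up to a lower-order correction: for $\hat m \ge 2$ at most one of $\hat q_i, 1-\hat q_i$ can fall below $1/\hat m$, so each variable contributes at most one action to $A$, and the number of $i$ with $\min\{\hat q_i, 1-\hat q_i\} < 1/\hat m$ is exactly $|\hat I_{\hat m}| \le \hat m$ by the definition of $\hat m$ (boundary ties at $1/\hat m$ contribute only a lower-order term). Second, and this is the crux, $\hat m \le 2m$. It is enough to show $|\hat I_{2m}| \le 2m$, because $\hat m$ is the least $\tau$ with $|\hat I_\tau| \le \tau$. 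The key estimate is a multiplicative Chernoff concentration: since $\hat q_i = 2T_{do(X_i=1)}/T$ with $T_{do(X_i=1)}\sim\mathrm{Binomial}(T/2, q_i)$, whenever $\min\{q_i,1-q_i\} \ge 1/m$ we have $\min\{\hat q_i, 1-\hat q_i\} \ge 1/(2m)$ except with probability $\exp(-\Omega(T/m))$. Hence on the good event $\hat I_{2m} \subseteq I_m$, so $|\hat I_{2m}| \le |I_m| \le m \le 2m$ by the defining property of $m = m(\vec q)$, giving $\hat m \le 2m$. A union bound over the $N$ variables costs $\log N$, absorbed into $\log(NT/m)$.

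With $\hat m \le 2m$ and $|A| = \bigo{m}$ in hand, every action receives enough samples. An action $a \notin A$ has $\hat p_a > 1/\hat m$, hence is observed $T_a > T/(2\hat m) \ge T/(4m)$ times in the observational phase; an action $a \in A$ is intervened on $T_A = T/(2|A|) = \Omega(T/m)$ times; and $do()$ is sampled $T/2$ times. In all cases the effective sample size is $\Omega(T/m)$, so Hoeffding yields $|\hat\mu_a - \mu_a| = \bigo{\sqrt{(m/T)\log(N/\delta)}}$. One technical point deserves care: for common actions the count $T_a$ and the estimate $\hat\mu_a$ are built from the same observational data and the event $\{a\notin A\}$ is correlated with $\hat\mu_a$. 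I would handle this by coupling, for each $a=do(X_i=x)$, the realised rewards to an i.i.d.\ $\mathrm{Bernoulli}(\mu_a)$ stream (valid because, conditioned on $X_{t,i}=x$, $Y_t$ has mean $\mu_a$ and successive rounds are independent) and taking a union bound over the sample index $1,\ldots,T/2$; this makes the concentration hold simultaneously for every possible value of $T_a$ at the cost of an extra $\log T$ factor, again absorbed into $\log(NT/m)$.

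Combining the pieces, on the good event all $2N+1$ estimates are within $\epsilon = \bigo{\sqrt{(m/T)\log(NT/m)}}$ of their means, so the best-arm reduction gives $\mu^\ast - \mu_{\hat a^\ast_T} \le 2\epsilon$, and accounting for the failure probability $\delta = m/T$ yields the stated bound. The main obstacle is the middle step: the set of re-sampled actions and the threshold $\hat m$ are random and data-driven, so one must show they track the true complexity $m(\vec q)$ despite variables whose probabilities sit near the $1/\tau$ thresholds. The self-consistent definition $\hat m = m(\vec{\hat q})$ is exactly what makes this work, since it simultaneously forces $|A| \lesssim \hat m$ and, via multiplicative concentration of the small probabilities $q_i \approx 1/m$, forces $\hat m \lesssim m$; verifying that multiplicative concentration down to scale $1/m$ and dealing with the boundary ties is where the real work lies.
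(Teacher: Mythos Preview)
Your proposal is correct and follows essentially the same route as the paper: multiplicative Chernoff on the $\hat q_i$ to get $\hat m \le 2m$, then $|A|\lesssim \hat m$ so every arm has $\Omega(T/m)$ effective samples, then Hoeffding with a union bound and the standard best-arm reduction with $\delta=m/T$. Your handling of the random $T_a$ via a union over the sample index is exactly the paper's Lemma~\ref{lem:conc2}; the only cosmetic difference is that the paper also records the lower bound $\hat m \ge 2m/3$, which is not actually used in the regret argument, and which you rightly omit.
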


\begin{theorem}\label{thm:lower}
For all $T$, $\vec{q}$ and all strategies, there exists a reward function such that
\eq{
\simpleregret 
\in \Omega\left(\sqrt{\frac{m(\vec{q})}{T}}\right)\,.
}
\end{theorem}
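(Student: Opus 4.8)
The plan is to prove the lower bound by a change-of-measure argument: I will exhibit a finite family of reward functions that share the \emph{fixed} observational law of $\vec X$ (governed by $\vec q$) but differ in which single, low-probability action is optimal, and then show no strategy can reliably identify the optimal action across the whole family within $T$ rounds. Write $m = m(\vec q)$ and assume $m \ge 3$ (the case $m=2$ reduces to the classical two-armed simple-regret lower bound $\bigomega{1/\sqrt T}$). By the definition of $m(\vec q)$ we have $|I_{m-1}| > m-1$, so at least $m$ variables satisfy $\min\set{q_i,1-q_i} < 1/(m-1) \le 2/m$. Fix $m$ such indices; for each let $j_i \in \set{0,1}$ be its rare value, so $p_i := \P{X_i = j_i} < 2/m$. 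Take a null reward $r_0 \equiv \tfrac12$ and, for each selected $i$, the alternative $r_i(\vec x) = \tfrac12 + \epsilon\,\ind{x_i = j_i}$ with $\epsilon := c\sqrt{m/T}$ for a small constant $c$. Under $r_i$ the action $do(X_i = j_i)$ is optimal with $\mu^* = \tfrac12 + \epsilon$, while every other action has mean at most $\tfrac12 + \epsilon p_i \le \tfrac12 + 2\epsilon/m$; hence every suboptimal action has gap $\Theta(\epsilon)$ and $\simpleregret$ under $r_i$ is at least $\Theta(\epsilon)\,\Pri{\hat a^*_T \ne do(X_i = j_i)}$.

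\emph{Bounding the information.} Because the law of $\vec X$ is identical under all $r_i$ and only the conditional law of $Y$ given $\vec X$ changes, the divergence decomposition lemma lets me write the KL divergence $\KL_i$ between the $T$-round interaction transcripts under $r_i$ and under $r_0$ as a sum of per-round conditional KL terms for $Y$. A single round contributes $\KL(\Ber(\tfrac12+\epsilon)\,\|\,\Ber(\tfrac12)) = \Theta(\epsilon^2)$ when the chosen action guarantees $X_i = j_i$ (that is, when the learner plays $do(X_i=j_i)$), and otherwise $X_i = j_i$ occurs with probability $p_i < 2/m$, so the round contributes less than $(2/m)\,\Theta(\epsilon^2)$. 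Letting $N_i$ denote the number of plays of $do(X_i=j_i)$, this yields $\KL_i \le \Theta(\epsilon^2)\,\EE_0[N_i] + T\cdot\bigo{\epsilon^2/m}$, where $\EE_0$ is taken under $r_0$.

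\emph{Summing and concluding.} The $m$ special actions are distinct, so $\sum_i N_i \le T$ and hence $\sum_{i=1}^m \KL_i \le \bigo{\epsilon^2 T}$. Set $\rho_i := \Pri{\hat a^*_T = do(X_i=j_i)}$ and $\sigma_i := \Prz{\hat a^*_T = do(X_i=j_i)}$; the latter events are disjoint, so $\sum_i \sigma_i \le 1$. Pinsker's inequality gives $\rho_i - \sigma_i \le \sqrt{\KL_i/2}$, and Cauchy--Schwarz then gives $\frac1m\sum_i \rho_i \le \frac1m + \sqrt{\frac{1}{2m}\sum_i\KL_i} \le \frac1m + \bigo{\epsilon\sqrt{T/m}}$. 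Choosing $c$ small makes $\epsilon\sqrt{T/m} = \bigo{c}$ a small constant, so the average success probability is bounded below $1$; taking the worst index $i$ gives $\Pri{\hat a^*_T \ne do(X_i=j_i)} = \Omega(1)$, and combining with the first paragraph produces a reward function with $\simpleregret \in \bigomega{\epsilon} = \bigomega{\sqrt{m/T}}$.

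The step I expect to require the most care is the divergence decomposition for the vector-valued post-action feedback: I must verify that the shared $\vec X$-marginal genuinely cancels, so only the $\Theta(\epsilon^2)$ reward terms survive, and that the per-round contribution of any action other than $do(X_i=j_i)$ is truly $\bigo{\epsilon^2/m}$ rather than $\Theta(\epsilon^2)$ --- this is precisely where the definition of $m(\vec q)$, through the bound $p_i < 2/m$, enters. A minor additional point is checking the boundary regime (small $m$, or $T$ so small that $\epsilon > \tfrac12$) so that each $r_i$ remains a valid $[0,1]$-valued reward function.
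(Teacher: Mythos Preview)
Your proposal is correct and follows essentially the same route as the paper: the hard family is identical (a null reward $r_0\equiv\tfrac12$ and shifted rewards $r_i(\vec x)=\tfrac12+\epsilon\,\ind{x_i=j_i}$ indexed by the $m$ rare coordinates), and the KL decomposition is the same, exploiting that only rounds where $X_i=j_i$ contribute and that this event has probability $O(1/m)$ unless the learner explicitly plays $do(X_i=j_i)$.

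The only noteworthy difference is the information-to-probability step. The paper invokes the Bretagnolle--Huber inequality (Tsybakov's Lemma~2.6) in the form $\Prz{\hat a^*_T=a^*}+\Pri{\hat a^*_T\ne a^*}\ge\exp(-\KL(\Ps_0,\Ps_i))$, then restricts to the subset $A=\{i\le m:\EE_0 T_i(T)\le 2T/m\}$ (which has $|A|\ge m/2$ by pigeonhole) to make every $\KL$ term at most a constant. You instead keep all $m$ indices, sum the KLs to $O(\epsilon^2 T)$, and use Pinsker plus Cauchy--Schwarz to bound the average success probability. Both are standard moves and yield the same $\Omega(\sqrt{m/T})$ rate; your averaging argument is arguably slightly cleaner because it avoids introducing the auxiliary set $A$, while the paper's version gives an explicit constant via the $e^{-1}$ factor. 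Your handling of the edge cases ($m=2$ and the regime $\epsilon>1/2$) matches what the paper does implicitly.
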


\ifsup
The proofs of Theorems \ref{thm:uq-simple} and \ref{thm:lower} may be found in Sections \ref{sec:thm:uq-simple} and \ref{sec:thm:lower} respectively.
\else
The proofs of Theorems \ref{thm:uq-simple} and \ref{thm:lower} follow by carefully analysing the concentration
of $\hat p_a$ and $\hat m$ about their true values and may be found in the supplementary material.
\fi
%We prove a lower bound on the simple regret that matches up to logarithmic factors the upper bound given in Theorem \ref{thm:uq-simple}. 
By utilizing knowledge of the causal structure, Algorithm \ref{alg:simple} effectively only has to explore the $m(\vec{q})$ 'difficult' actions. Standard multi-armed bandit algorithms must explore all $2N$ actions and thus achieve regret  $\smash{\Omega(\sqrt{N/T})}$. Since $m$ is typically much smaller than $N$, the new algorithm can significantly outperform classical bandit algorithms in this setting. In practice, you would combine the data from both phases to estimate rewards for the low probability actions. We do not do so here as it slightly complicates the proofs and does not improve the worst case regret.

%%%%%%%%%%%%%%%%%%%%%%%%%%%%%%%%%%%%%%%%%%%%%%%%%
% UPPER BOUND ON SIMPLE REGRET GENERAL CASE
%%%%%%%%%%%%%%%%%%%%%%%%%%%%%%%%%%%%%%%%%%%%%%%%%
\section{Regret Bounds for General Graphs}
\label{sec:simple-regret-general}
We now consider the more general problem where the graph structure is known, but arbitrary. For general graphs, $\P{Y|X_i=j} \neq \P{Y|do(X_i=j)}$ (correlation is not causation). However, if all the variables are observable, any causal distribution $\P{X_1...X_N|do(X_i=j)}$ can be expressed in terms of observational distributions via the truncated factorization formula \cite{Pearl2000}. 
\eq{
\P{X_1...X_N|do(X_i=j)} = 
\prod_{k \neq i}\P{X_k|\parents{X_k}}\delta(X_i - j)\,, 
} 
where $\parents{X_k}$ denotes the parents of $X_k$ and $\delta$ is the dirac delta function. 

We could naively generalize our approach for parallel bandits by observing for $T/2$ rounds, applying the truncated product factorization to 
write an expression for each $\P{Y|a}$ in terms of observational quantities and explicitly playing the actions for which the observational 
estimates were poor. However, it is no longer optimal to ignore the information we can learn about the reward for intervening on one variable 
from rounds in which we act on a different variable. Consider the graph in Figure \ref{fig:causalchain} and suppose each variable deterministically 
takes the value of its parent, $X_k = X_{k-1}$ for $k\in {2,\ldots,N}$ and $\P{X_1} = 0$. We can learn the reward for all the interventions $do(X_i = 1)$ 
simultaneously by selecting $do(X_1 = 1)$, but not from $do()$. In addition, variance of the observational estimator for $a = do(X_i = j)$ can be 
high even if $\P{X_i = j}$ is large. Given the causal graph in Figure \ref{fig:causalStructure_confounded}, $\P{Y|do(X_2= j)} = \sum_{X_1}\P{X_1}\P{Y|X_1, X_2 = j}$. 
Suppose $X_2 = X_1$ deterministically, no matter how large $\P{X_2 = 1}$ is we will never observe $(X_2=1,X_1 = 0)$ and so cannot 
get a good estimate for $\P{Y|do(X_2=1)}$. 

To solve the general problem we need an estimator for each action that incorporates information obtained from every other action and a way to optimally 
allocate samples to actions. To address this difficult problem, we assume the conditional interventional distributions $\P{\parents{Y}|a}$ (but not $\P{Y|a}$) 
are known. These could be estimated from experimental data on the same covariates but where the outcome of interest differed, such that $Y$ was not included, 
or similarly from observational data subject to identifiability constraints. Of course this is a somewhat limiting assumption, but seems like a natural place to
start. The challenge of estimating the conditional distributions for all variables in an optimal way is left as an interesting future direction.
Let $\eta$ be a distribution on available interventions $a \in \calA$ so $\eta_a \geq 0$ and $\sum_{a \in \calA} \eta_a = 1$. 
Define $Q = \sum_{a \in \calA} \eta_a \P{\parents{Y}|a}$ to be the mixture distribution over the interventions with respect to $\eta$.

\begin{wrapfigure}[10]{r}{0.5\textwidth}
\vspace{-30pt}
\begin{minipage}{.5\textwidth}
\begin{algorithm}[H]
\caption{General Algorithm}\label{alg:general}
\begin{algorithmic}
\STATE {\bf Input:} $T$, $\eta \in [0,1]^{\calA}$, $B \in [0,\infty)^{\calA}$
\FOR{$t \in \set{1,\ldots,T}$}
\STATE Sample action $a_t$ from $\eta$
\STATE Do action $a_t$ and observe $X_t$ and $Y_t$
\ENDFOR
\FOR{$a \in \calA$}
\STATE
\eq {
\hat \mu_a =  \frac{1}{T} \sum_{t=1}^T Y_t R_a(X_t)  \ind{R_a(X_t) \leq B_a}
}
\ENDFOR
\STATE {\bf return} $\hat a^*_T = \argmax_a \hat \mu_a$
\end{algorithmic}
\end{algorithm}
\end{minipage}
\end{wrapfigure}

Our algorithm samples $T$ actions from $\eta$ and uses them to estimate the returns $\mu_a$ for all $a \in \calA$ simultaneously via a truncated importance weighted estimator. Let $\parents{Y}(X)$ denote the realization of the variables in $X$ that are parents of Y and define $R_a(X) = \frac{\Pn{a}{\parents{Y}(X)}}{\Q{\parents{Y}(X)}}$

\eq {
\hat \mu_a =  \frac{1}{T} \sum_{t=1}^T Y_t R_a(X_t)  \ind{R_a(X_t) \leq B_a}\,, 
} 

where $ B_a \geq 0$  is a constant that tunes the level of truncation to be chosen subsequently. The truncation introduces a bias in the estimator, but simultaneously chops the potentially heavy tail that is so detrimental to its concentration guarantees. 

The distribution over actions, $\eta$ plays the role of allocating samples to actions and is optimized to minimize the worst-case simple regret. Abusing notation we define $m(\eta)$ by
\eq{
m(\eta) = \max_{a \in \calA} \EEa\left[\frac{\Pn{a}{\parents{Y}(X)}}{\Q{\parents{Y}(X)}}\right]\,,\text{ where } \EEa \text{ is the expectation with respect to } \Pn{a}.
}

We will show shortly that $m(\eta)$ is a measure of the difficulty of the problem that approximately coincides with the version for parallel bandits, justifying the name overloading.

\begin{theorem}\label{thm:general}
If Algorithm \ref{alg:general} is run with $B \in \R^{\calA}$ given by $B_a = \sqrt{\frac{m(\eta)T}{\log\left(2T|\calA|\right)}}\,.$

\eq{
\simpleregret \in \bigo{\sqrt{\frac{m(\eta)}{T} \log\left(2T|\calA|\right)}}\,.
}
\end{theorem}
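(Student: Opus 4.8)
The plan is to view $\hat\mu_a$ as a truncated version of an unbiased importance-weighted estimator of $\mu_a$, split its error into a deterministic truncation bias and a stochastic fluctuation, obtain a confidence bound that holds uniformly over $\calA$, and finally convert this into a simple-regret bound by the standard best-arm-identification argument. The level $B_a$ is the free parameter that must be chosen to balance the bias (which shrinks as $B_a$ grows) against the price the truncation level exacts in the concentration inequality (which grows with $B_a$); the claimed choice is exactly the minimizer of this tradeoff.

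First I would verify that the untruncated estimator $\frac1T\sum_t Y_t R_a(X_t)$ is unbiased for $\mu_a$. Because $Y$ depends on the remaining variables only through $\parents{Y}$, and because drawing an action from $\eta$ and then observing induces exactly the mixture marginal $Q$ on $\parents{Y}$, a direct computation over realizations $pa$ of $\parents{Y}$ gives
\eq{
\EE\left[Y_t R_a(X_t)\right] = \sum_{pa} \Q{pa}\,\EE[Y\mid pa]\,\frac{\Pn{a}{pa}}{\Q{pa}} = \sum_{pa}\Pn{a}{pa}\,\EE[Y\mid pa] = \mu_a\,.
}
Truncation only removes nonnegative terms, so $\EE[\hat\mu_a]\le\mu_a$, and since $Y_t\in\set{0,1}$ the bias obeys $0 \le \mu_a - \EE[\hat\mu_a] \le \EE_Q[R_a(X)\ind{R_a(X) > B_a}] = \Pns{a}{R_a(X) > B_a}$. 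By Markov's inequality under $\Pn{a}{\cdot}$ together with $\EEa[R_a]\le m(\eta)$, this bias is at most $m(\eta)/B_a$.

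The crux is the stochastic part. Write $\hat\mu_a = \frac1T\sum_t Z_{a,t}$ with $Z_{a,t} = Y_t R_a(X_t)\ind{R_a(X_t)\le B_a}\in[0,B_a]$ i.i.d.\ across $t$. The range is $B_a$, but the naive variance bound $B_a$ is too weak; the key identity is that the truncated second moment is controlled by $m(\eta)$ itself, because
\eq{
\Var(Z_{a,t}) \le \EE_Q\left[R_a(X)^2\right] = \sum_{pa}\frac{\Pn{a}{pa}^2}{\Q{pa}} = \EEa\left[R_a(X)\right] \le m(\eta)\,.
}
With range $B_a$ and variance at most $m(\eta)$, Bernstein's inequality yields $\P{|\hat\mu_a - \EE[\hat\mu_a]| > \epsilon}\le 2\exp\left(-\tfrac{T\epsilon^2/2}{m(\eta) + B_a\epsilon/3}\right)$. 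Taking a union bound over the $|\calA|$ actions and setting $\epsilon \asymp \sqrt{m(\eta)\log(2T|\calA|)/T}$ drives the total failure probability to $\bigo{1/T}$; the specific choice $B_a = \sqrt{m(\eta)T/\log(2T|\calA|)}$ is what makes $B_a\epsilon \asymp m(\eta)$, so the exponent is $\Theta(\log(2T|\calA|))$ as required.

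Finally, combining the bias bound $m(\eta)/B_a = \sqrt{m(\eta)\log(2T|\calA|)/T}$ with the confidence width $\epsilon$, on the good event every $\hat\mu_a$ lies within $\bigo{\sqrt{m(\eta)\log(2T|\calA|)/T}}$ of $\mu_a$. Writing $a^* = \argmax_a\mu_a$ and decomposing $\mu_{a^*}-\mu_{\hat a^*_T} = (\mu_{a^*}-\hat\mu_{a^*}) + (\hat\mu_{a^*}-\hat\mu_{\hat a^*_T}) + (\hat\mu_{\hat a^*_T}-\mu_{\hat a^*_T})$, the middle term is nonpositive by optimality of $\hat a^*_T$, so on the good event the regret is at most twice the uniform confidence width. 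Since $Y\in\set{0,1}$, the regret is at most $1$ on the $\bigo{1/T}$ failure event, contributing an $\bigo{1/T}$ term that is dominated by $\sqrt{m(\eta)\log(2T|\calA|)/T}$, giving the claimed bound. I expect the main obstacle to be the bias--variance tradeoff behind the choice of $B_a$: one must control the variance by $m(\eta)$ rather than the crude range $B_a$, since only then do the two error sources balance at the optimal $\sqrt{m(\eta)/T}$ rate.
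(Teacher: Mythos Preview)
Your proposal is correct and follows essentially the same route as the paper's proof: both split the error into a truncation bias bounded via Markov by $m(\eta)/B_a$ and a stochastic fluctuation controlled by Bernstein's inequality using the key variance identity $\EE_Q[R_a^2]=\EEa[R_a]\le m(\eta)$, then take a union bound over $\calA$ and finish with the standard $\argmax$ decomposition. The only cosmetic difference is that the paper exploits the one-sidedness of the bias so that only $\beta_{a^*}$ (not also $\beta_{\hat a^*_T}$) appears in the final inequality, whereas your symmetric decomposition incurs the bias twice; this does not affect the $\bigo{\cdot}$ statement.
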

\ifsup 
The proof is in Section \ref{sec:thm:general}.
\else
The proof is in the supplementary materials.
\fi Note the regret has the same form as that obtained for Algorithm \ref{alg:simple}, with $m(\eta)$ replacing $m(q)$. Algorithm \ref{alg:simple} assumes only the graph structure and not knowledge of the conditional distributions on $X$. Thus it has broader applicability to the parallel graph than the generic algorithm given here. We believe that Algorithm \ref{alg:general} with the optimal choice of $\eta$ is close to minimax optimal, but leave lower bounds
for future work.

\paragraph{Choosing the Sampling Distribution} Algorithm \ref{alg:general} depends on a choice of sampling distribution $\operatorname{Q}$ that is determined by $\eta$. In light of Theorem \ref{thm:general}
a natural choice of $\eta$ is the minimiser of $m(\eta)$.
\eq{
\eta^* 
= \argmin_\eta m(\eta) = \argmin_\eta \underbrace{\max_{a \in \calA} \EEa \left[\frac{\Pn{a}{\parents{Y}(X)}}{\sum_{b \in \calA} \eta_b \Pn{b}{\parents{Y}(X)}}\right]}_{m(\eta)}\,.
}
Since the mixture of convex functions is convex and the maximum of a set of convex functions is convex, we see that $m(\eta)$ is convex (in $\eta$).
Therefore the minimisation problem may be tackled using standard techniques from convex optimisation. An interpretation of $m(\eta^*)$ is the minimum achievable worst-case variance of the importance weighted estimator. In the experimental section we present some special cases, but for now we give two simple results. The first shows that $|\calA|$ serves as an upper bound on $m(\eta^*)$.

\begin{proposition}\label{pro:m-bound}
$m(\eta^*) \leq |\calA|$. \textit{Proof.} 
\textup{By definition, $m(\eta^*) \leq m(\eta)$ for all $\eta$. Let $\eta_a = 1/|\calA|\,\forall a$.}
\eq{
m(\eta) 
= \max_a \EEa\left[\frac{\Pn{a}{\parents{Y}(X)}}{\Q{\parents{Y}(X)}}\right] 
\leq \max_a \EEa\left[\frac{\Pn{a}{\parents{Y}(X)}}{\eta_a \Pn{a}{\parents{Y}(X)}}\right] 
= \max_a \EEa\left[\frac{1}{\eta_a}\right] = |\calA| %\qedhere
}
\end{proposition}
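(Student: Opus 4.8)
The plan is to exploit the minimality of $\eta^*$. Since $\eta^* = \argmin_\eta m(\eta)$, we have $m(\eta^*) \leq m(\eta)$ for \emph{every} feasible distribution $\eta$, so it suffices to exhibit a single $\eta$ whose value $m(\eta)$ is at most $|\calA|$. The natural candidate is the uniform distribution $\eta_a = 1/|\calA|$ for all $a \in \calA$, which spreads the sampling budget equally over the actions and should ensure that no importance ratio is too large.

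The key step is a pointwise bound on the importance weight for this choice. Writing out the mixture gives $\Q{\parents{Y}(X)} = \frac{1}{|\calA|}\sum_{b \in \calA} \Pn{b}{\parents{Y}(X)}$. Because every term in this sum is nonnegative, dropping all but the $a$-th term yields the pointwise inequality $\Q{\parents{Y}(X)} \geq \frac{1}{|\calA|}\,\Pn{a}{\parents{Y}(X)}$ for each fixed $a$, and rearranging gives $\frac{\Pn{a}{\parents{Y}(X)}}{\Q{\parents{Y}(X)}} \leq |\calA|$ for every realisation of $X$ at which the ratio is defined. With this in hand the remaining steps are routine: taking the expectation $\EEa$ preserves the inequality, so $\EEa\!\left[\frac{\Pn{a}{\parents{Y}(X)}}{\Q{\parents{Y}(X)}}\right] \leq |\calA|$, and since the bound is uniform in $a$, the maximum over $a$ is also at most $|\calA|$. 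Hence $m(\eta) \leq |\calA|$, and combining with $m(\eta^*) \leq m(\eta)$ completes the argument.

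There is essentially no serious obstacle here; the only point warranting care is the behaviour of the ratio where $\Q{\parents{Y}(X)} = 0$. But wherever the uniform mixture vanishes every component $\Pn{a}{\parents{Y}(X)}$ vanishes too, so the numerator is zero on precisely this set and such realisations carry no mass under $\Pn{a}$, leaving the expectation unaffected. The whole proof is thus a one-line domination estimate dressed up by the definition of $m$.
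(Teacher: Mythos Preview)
Your argument is correct and is essentially identical to the paper's own proof: choose the uniform $\eta$, lower-bound $\operatorname{Q}\{\parents{Y}(X)\}$ by the single term $\eta_a \Pn{a}{\parents{Y}(X)}$, and conclude that the ratio is at most $|\calA|$ pointwise. Your extra remark about the $0/0$ case is a harmless clarification that the paper omits.
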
 

The second observation is that, in the parallel bandit setting, $m(\eta^*) \leq 2m(\boldsymbol{q})$. This is easy to see by letting $\eta_a = 1/2$ for $a = do()$ and $\eta_a = \ind{\P{X_i = j} \leq 1/m(\boldsymbol{q})} / 2m(\boldsymbol{q})$ for the actions corresponding to $do(X_i=j)$, and applying an argument like that for Proposition~\ref{pro:m-bound}. \ifsup 
The proof is in Section \ref{sec:m-equivelence}.
\else
The proof is in the supplementary materials.
\fi

\begin{remark}\label{rem:truncate}
The choice of $B_a$ given in Theorem \ref{thm:general} is not the only possibility. As we shall see in the experiments, it is 
often possible to choose $B_a$ significantly
larger when there is no heavy tail and this can drastically improve performance by eliminating the bias. This is especially true when the ratio $R_a$ is never too large
and Bernstein's inequality could be used directly without the truncation. For another discussion see the article by \citet{BJQ13} who also use importance weighted estimators
to learn from observational data.
\end{remark}

%%%%%%%%%%%%%%%%%%%%%%%%%%%%%%%%%%%%%%%%%%%%%%%%%
% EXPERIMENTS
%%%%%%%%%%%%%%%%%%%%%%%%%%%%%%%%%%%%%%%%%%%%%%%%%
\section{Experiments}
\label{sec:experiments}

We compare Algorithms \ref{alg:simple} and \ref{alg:general} with Successive Elimination on the parallel bandit problem
under a variety of conditions, including where the importance weighted estimator used by Algorithm \ref{alg:general} is not truncated,
which is justified in this setting by Remark \ref{rem:truncate}. Throughout we use a model in which $Y$ depends only on a single variable $X_1$ (this is unknown to the algorithms). $Y_t \sim \bernoulli(\frac{1}{2}+\epsilon)$ if $X_1=1$ and $Y_t \sim \bernoulli(\frac{1}{2}-\epsilon')$ otherwise, where $\epsilon' = q_1\epsilon/(1-q_1)$. This leads to an expected reward of $\frac{1}{2}+\epsilon$ for $do(X_1=1)$, $\frac{1}{2}-\epsilon'$ for $do(X_1=0)$ and $\frac{1}{2}$ for all other actions. We set $q_i = 0$ for $i \leq m$ and $\frac{1}{2}$ otherwise. Note that changing $m$ and thus $\boldsymbol{q}$ has no effect on the reward distribution. 

%\eq{
%Y_t \sim \begin{cases}
%\bernoulli(\frac{1}{2}+\epsilon) & \text{if } X_1 = 1 \\
%\bernoulli(\frac{1}{2}-\frac{q_1}{1-q_1}\epsilon) & \text{otherwise}\,.
%\end{cases}
%}

We compare the performance of the Algorithm 1, which is specific to the parallel problem, but does not require knowledge of $\boldsymbol{q}$, with that of Algorithm 2 and the Successive Reject algorithm of \cite{audibert2010best}. For each experiment, we show the average regret over 10,000 simulations with error bars displaying three standard errors.

In figure \ref{fig:simple_vs_m} we fix the number of variables $N$ and the horizon $T$ and compare the performance of the algorithms as $m$ increases. The regret for the Successive Reject algorithm is constant as it depends only on the reward distribution and has no knowledge of the causal structure. For the causal algorithms it increases approximately with $\sqrt{m}$. As $m$ approaches $N$, the gain the causal algorithms obtain from knowledge of the structure is outweighed by fact they do not leverage the observed rewards to focus sampling effort on actions with high pay-offs.

\begin{figure}
    \begin{subfigure}[t]{0.3\textwidth}
		\centering    
    		\includegraphics[width=\textwidth]{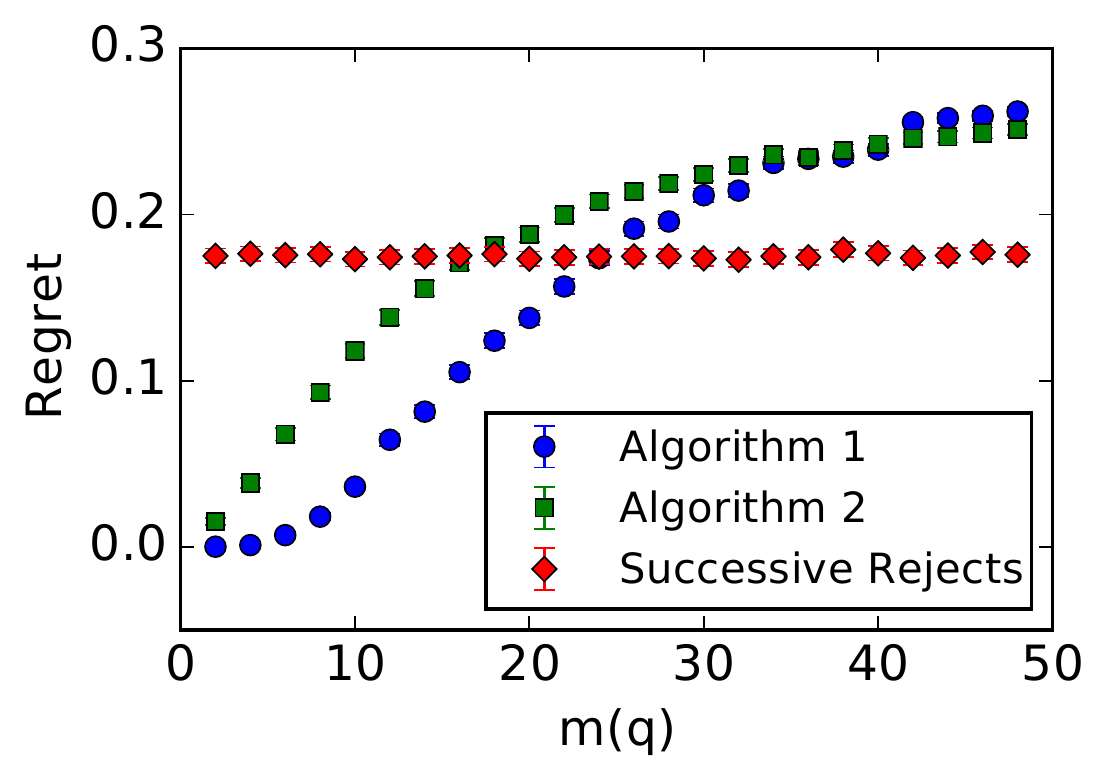}
    		\caption{Simple regret vs $m(\boldsymbol{q})$ for fixed horizon $T=400$ and number of variables $N = 50$}
        \label{fig:simple_vs_m}
    \end{subfigure}\hfill
    \begin{subfigure}[t]{0.3\textwidth}
    		\centering
        \includegraphics[width=\textwidth]{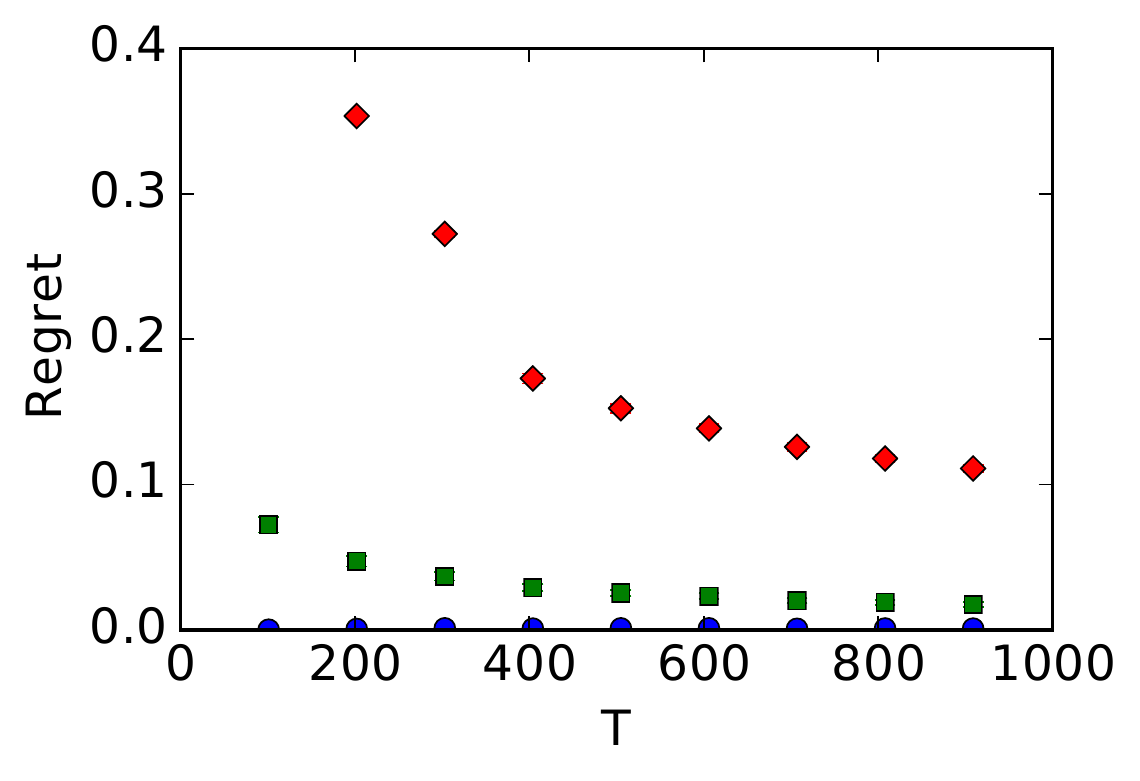}
    		\caption{Simple regret vs horizon, $T$, with $N = 50$, $m=2$ and $\epsilon = \sqrt{\frac{N}{8T}}$}
        \label{fig:simple_vs_T_vary_epsilon}
    \end{subfigure}\hfill
    \begin{subfigure}[t]{0.3\textwidth}
    		\centering
    		\includegraphics[width=\textwidth]{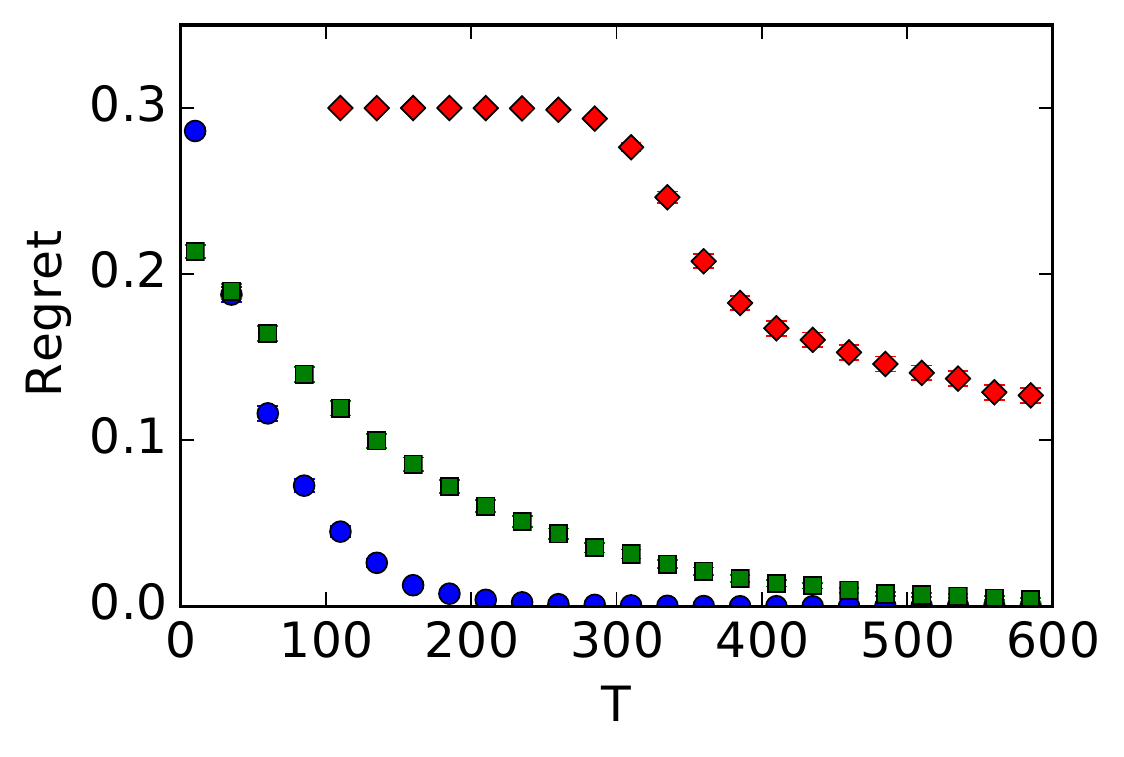}
    		\caption{Simple regret vs horizon, $T$, with $N = 50$, $m=2$ and fixed $\epsilon = .3$}
    		\label{fig:simple_vs_T}
    \end{subfigure}
    \caption{Experimental results}
    \label{fig:experiments}
\end{figure}

%\begin{figure}
%  \centering
%  \begin{minipage}[b]{0.3\textwidth}
%    \includegraphics[width=\textwidth]{exp_regret_vs_m_N50_T400_s10000_20160518_1534.pdf}
%    \caption{Simple regret vs $m(\boldsymbol{q})$ for fixed horizon $T=400$ and number of variables $N = 50$}
%    \label{fig:simple_vs_m}
%  \end{minipage}
%  \hfill
%  \begin{minipage}[b]{0.3\textwidth}
%    \includegraphics[width=\textwidth]{experiment2_N50_a4_s10000_20160518_1534.pdf}
%    \caption{Simple regret vs horizon, $T$, with $N = 50$, $m=2$ and $\epsilon = \sqrt{\frac{N}{8T}}$}
%    \label{fig:simple_vs_T_vary_epsilon}
%  \end{minipage}
%  \hfill
%  \begin{minipage}[b]{0.3\textwidth}
%    \includegraphics[width=\textwidth]{experiment3_N50_s10000_20160518_1534.pdf}
%    \caption{Simple regret vs horizon, $T$, with $N = 50$, $m=2$ and fixed $\epsilon = .3$}
%    \label{fig:simple_vs_T}
%  \end{minipage}
%\end{figure}

Figure \ref{fig:simple_vs_T_vary_epsilon} demonstrates the performance of the algorithms in the worst case environment for standard bandits, where the gap between the optimal and sub-optimal arms, $\smash{\epsilon = \sqrt{N/(8T)}}$ , is just too small to be learned. This gap is learn-able by the causal algorithms, for which the worst case $\epsilon$ depends on $m \ll N$. In figure \ref{fig:simple_vs_T} we fix $N$ and $\epsilon$ and observe that, for sufficiently large $T$, the regret decays exponentially. The decay constant is larger for the causal algorithms as they have observed a greater effective number of samples for a given $T$. 

For the parallel bandit problem, the regression estimator used in the specific algorithm outperforms the truncated importance weighted estimator in the more general algorithm, despite the fact the specific algorithm must estimate $\boldsymbol{q}$ from the data. 
This is an interesting phenomenon that has been noted before in off-policy evaluation where the regression (and not the importance weighted) estimator is known to be minimax optimal asymptotically \citep{LMS14}.

%%%%%%%%%%%%%%%%%%%%%%%%%%%%%%%%%%%%%%%%%%%%%%%%%
% DISCUSSION
%%%%%%%%%%%%%%%%%%%%%%%%%%%%%%%%%%%%%%%%%%%%%%%%%
\section{Discussion \& Future Work}
\label{sec:discussion}
Algorithm~\ref{alg:general} for general causal bandit problems 
estimates the reward for all allowable interventions $a \in \calA$ over $T$ rounds by sampling and applying interventions from a distribution $\eta$.
Theorem~\ref{thm:general} shows that this algorithm has (up to log factors) simple regret that is $\smash{\mathcal O(\sqrt{m(\eta)/T)}}$ where 
the parameter $m(\eta)$ measures the difficulty of learning the causal model and is always less than $N$.
The value of $m(\eta)$ is a uniform bound on the variance of the reward estimators $\hat{\mu}_a$ and, intuitively, problems where all variables' values in the causal model ``occur naturally'' when interventions are sampled from $\eta$ will have low values of $m(\eta)$.

The main practical drawback of Algorithm~\ref{alg:general} is that both the estimator $\hat{\mu}_a$ and the optimal sampling distribution $\eta^*$ (\ie, the one that minimises $m(\eta)$) require knowledge of the conditional distributions $\Pn{a}{\parents{Y}}$ for all $a \in \calA$.
In contrast, in the special case of parallel bandits, Algorithm~\ref{alg:simple} uses the $do()$ action to effectively estimate $m(\eta)$ and the rewards then re-samples the interventions with variances that are not bound by $\hat{m}(\eta)$.
Despite these extra estimates, Theorem~\ref{thm:lower} shows that this approach is optimal (up to log factors).
Finding an algorithm that only requires the causal graph and lower bounds for its simple regret in the general case is left as future work.

\vspace{-0.4cm} 
\paragraph{Making Better Use of the Reward Signal}
Existing algorithms for best arm identification are based on ``successive rejection'' (SR) of arms based on UCB-like bounds on their rewards~\citep{Even-Dar2002}.
In contrast, our algorithms completely ignore the reward signal when developing their arm sampling policies and only use the rewards when estimating $\hat{\mu}_a$.
Incorporating the reward signal into our sampling techniques or designing more adaptive reward estimators that focus on high reward interventions is an obvious next step.
This would likely improve the poor performance of our causal algorithm relative to the sucessive rejects algorithm for large $m$, as seen in Figure~\ref{fig:simple_vs_m}.
For the parallel bandit the required modifications should be quite straightforward. The idea would be to adapt the algorithm to essentially use successive elimination in
the second phase so arms are eliminated as soon as they are provably no longer optimal with high probability. In the general case a similar modification is also possible
by dividing the budget $T$ into phases and optimising the sampling distribution $\eta$, eliminating arms when their confidence intervals are no longer overlapping. Note
that these modifications will not improve the minimax regret, which at least for the parallel bandit is already optimal. For this reason we prefer to emphasize 
the main point that causal structure should be exploited when available. Another observation is that Algorithm \ref{alg:general} is actually using a fixed design, which
in some cases may be preferred to a sequential design for logistical reasons. This is not possible for Algorithm \ref{alg:simple}, since the $\vec{q}$ vector is unknown.

\vspace{-0.4cm} 
\paragraph{Cumulative Regret}
Although we have focused on simple regret in our analysis, it would also be natural to consider the cumulative regret.
In the case of the parallel bandit problem we can slightly modify the analysis from \citep{wu2015online} on bandits with side information 
to get near-optimal cumulative regret guarantees. They consider a finite-armed bandit model with side information where in reach round
the learner chooses an action and receives a Gaussian reward signal for all actions, but with a known variance that depends on the chosen action.
In this way the learner can gain information about actions it does not take with varying levels of accuracy. The reduction follows by substituting
the importance weighted estimators in place of the Gaussian reward. In the case that $\vec{q}$ is known this would lead to a known variance
and the only (insignificant) difference is the Bernoulli noise model. In the parallel bandit case we believe this would lead to near-optimal cumulative regret,
at least asymptotically. 

%Their model assumes the rewards for all arms $a$ are Gaussian with mean $\mu_a$ and variance $\sigma^2_{ab}$ and that playing an arm $a$ will reveal a side observation $Y_{ab}$ of the reward for all arms $b$ distributed with mean $\mu_b$ and variance $\sigma^2_{ab}$.
%We can build a similar dependence structure with variances for a Bernoulli reward variable that is derived from the $\vec{q}$ vector of probabilities.
%\todom{Check this!}
%Even though the original results are for Gaussian rewards we believe the analysis will go through largely unchanged.

The parallel bandit problem can also be viewed as an instance of a time varying graph feedback problem \citep{Alon2015,Kocak2014}, where at each timestep the feedback graph $G_t$ is selected stochastically, dependent on $\boldsymbol{q}$, and revealed after an action has been chosen. The feedback graph is distinct from the causal graph. A link $A \rightarrow B$ in $G_t$ indicates that selecting the action $A$ reveals the reward for action $B$. For this parallel bandit problem, $G_t$ will always be a star graph with the action $do()$ connected to half the remaining actions. However, \citet{Alon2015,Kocak2014} give adversarial algorithms, which when applied to the parallel bandit problem obtain the standard bandit regret. A malicious adversary can select the same graph each time, such that the rewards for half the arms are never revealed by the informative action. This is equivalent to a nominally stochastic selection of feedback graph where $\boldsymbol{q} = \boldsymbol{0}$. 

% \cite{Lelarge2012} consider a stochastic version of the graph feedback problem, but with a fixed graph available to the algorithm before it must select an action. In addition, their algorithm is not optimal for all graph structures and fails, in particular, to provide improvements for star like graphs as in our case. \cite{Buccapatnam2014} improve the dependence of the algorithm on the graph structure but still assume the graph is fixed and available to the algorithm before the action is selected. 

\vspace{-0.4cm} 
\paragraph{Causal Models with Non-Observable Variables}
If we assume knowledge of the conditional \textit{interventional} distributions $\Pn{a}{\parents{Y}}$ our analysis applies unchanged to the case of causal models with 
non-observable variables. Some of the interventional distributions may be non-identifiable meaning we can not obtain prior estimates for $\Pn{a}{\parents{Y}}$ from 
even an infinite amount of observational data. Even if all variables are observable and the graph is known, if the conditional distributions are unknown, then Algorithm
\ref{alg:general} cannot be used. Estimating these quantities while simultaneously minimising the simple regret is an interesting and challenging open problem.

% For example, if we had access to a data set of \textit{experiments} in which the reward variable $Y$ was not available from which to build estimates of $P_a$.
% In this case, some conditional distributions may be non-identifiable. 
% The corresponding actions can be immediately added to the set $A$ prior to collecting any data. 
% We can then use the same algorithm as in the case where there are no latent variables, except that we will have to use the more general do-calculus rather than simply adjusting for the parents to write the expression for each action in terms of observational data.
% Combining our estimation techniques with insights from \citet{Bareinboim2015} for handling unobserved confounders would be worth investigation.

% More generally, assuming causal structure creates more complex types of side information, such as that shown in equation \ref{eq:estimation_transfer}. In this case, selecting one action does not fully reveal an alternate action but provides some information towards an estimate. The quality of the estimate notably depends not only on the number of times that action was selected. For example, to get a good estimate for $X_1 = 1$ by intervening on $X_2$ requires us to sample both $X_2=0$ and $X_2=1$, in proportions dependent on $q_2$. This more complex side information does not fit within the graph feedback framework.

\vspace{-0.4cm} 
\paragraph{Partially or Completely Unknown Causal Graph}
A much more difficult generalisation would be to consider causal bandit problems where the causal graph is completely unknown or known to be a member of class of models.
The latter case arises naturally if we assume free access to a large observational dataset, from which the Markov equivalence class can be found via causal discovery techniques. 
Work on the problem of selecting experiments to discover the correct causal graph from within a Markov equivalence class~\cite{Eberhardt2005,eberhardt2010causal,hauser2014two,Hu2014b} could potentially be incorporated into a causal bandit algorithm.
In particular, \citet{Hu2014b} show that only $\bigo{\log \log n}$ multi-variable interventions are required on average to recover a causal graph over $n$ variables once purely observational data is used to recover the ``essential graph''.
Simultaneously learning a completely unknown causal model while estimating the rewards of interventions without a large observational dataset would be much more challenging.

% (Partially known structure)
% Key results $bigo(n)$ singleton or $bigo(log log n)$ multi-variate experiments are required.
% - Focus on minimizing the number of experiments that must be performed. Examples of active versus online learning. The cost of experiments constant or at least known. 

% (Unknown Structure)
% If we need to learn the structure, in an online environment. 
% Experiment is much, much more revealing than inference from observational data. We would expect it to dominate. 

%\pagebreak
{\small\bibliography{libraryicml}}
\bibliographystyle{apalike}

\ifsup

%%%%%%%%%%%%%%%%%%%%%%%%%%%%%%%%%%%%%%%%%%%%
% PROOF OF SIMPLE REGRET UPPER BOUND
%%%%%%%%%%%%%%%%%%%%%%%%%%%%%%%%%%%%%%%%%%%%
\section{Proof of Theorem \ref{thm:uq-simple}}\label{sec:thm:uq-simple}

Assume without loss of generality that $q_1 \leq q_2 \leq \ldots \leq q_N \leq 1/2$. The assumption is non-restrictive since all variables
are independent and permutations of the variables can be pushed to the reward function.
The proof of Theorem \ref{thm:uq-simple} requires some lemmas. \

\begin{lemma}\label{lem:conc1}
Let $i \in \set{1,\ldots, N}$ and $\delta > 0$. Then
\eq{
\P{\left|\hat q_i - q_i\right| \geq \sqrt{\frac{6q_i}{T} \log \frac{2}{\delta}}} \leq \delta\,.
}
\end{lemma}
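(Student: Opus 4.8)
The plan is to recognise $\hat q_i$ as a scaled sum of i.i.d.\ indicators and control it with a relative (multiplicative) Chernoff bound. Writing $n = T/2$, the observational phase produces $n$ independent draws $X_{1,i},\dots,X_{n,i}$ with $X_{t,i}\sim\bernoulli(q_i)$, and by the definitions in Algorithm~\ref{alg:simple} we have $\hat q_i = \frac{2 T_{do(X_i=1)}}{T} = \frac1n\sum_{t=1}^n \ind{X_{t,i}=1}$, so $\hat q_i$ is the empirical mean of these indicators with $\E{\hat q_i}=q_i$. Setting $S=\sum_{t=1}^n\ind{X_{t,i}=1}$ (so $S$ is $\operatorname{Binomial}(n,q_i)$ with mean $\mu=nq_i$), the claimed event $\bigl\{|\hat q_i-q_i|\ge \sqrt{6q_i\log(2/\delta)/T}\bigr\}$ is exactly $\bigl\{|S-\mu|\ge \sqrt{3\mu\log(2/\delta)}\bigr\}$ after multiplying through by $n$ and using $T=2n$.

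First I would apply the two one-sided multiplicative Chernoff bounds $\P{S\ge(1+\epsilon)\mu}\le e^{-\mu\epsilon^2/3}$ and $\P{S\le(1-\epsilon)\mu}\le e^{-\mu\epsilon^2/2}$. Choosing $\epsilon=\sqrt{3\log(2/\delta)/\mu}$ makes the absolute deviation $\epsilon\mu=\sqrt{3\mu\log(2/\delta)}$ equal the threshold above, and makes the (weaker) upper-tail exponent $\mu\epsilon^2/3=\log(2/\delta)$, so each tail is at most $e^{-\log(2/\delta)}=\delta/2$; a union bound over the two tails then yields the claimed $\delta$. Translating $\epsilon\mu=\sqrt{3\mu\log(2/\delta)}$ back through $\mu=nq_i=(T/2)q_i$ and dividing by $n$ reproduces exactly the $\sqrt{6q_i\log(2/\delta)/T}$ in the statement; this is where the constant $6$ comes from, namely the $3$ of the upper-tail exponent times the factor $2$ coming from $T=2n$.

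The step I expect to be the main obstacle is the range of validity of the upper-tail bound: $\P{S\ge(1+\epsilon)\mu}\le e^{-\mu\epsilon^2/3}$ only holds for $\epsilon$ not too large (roughly $\epsilon\lesssim 1$), i.e.\ only when $\mu=(T/2)q_i\gtrsim 3\log(2/\delta)$. When $q_i$ is very small the chosen $\epsilon$ exceeds $1$, the quadratic-exponent form degrades, and a naive application breaks down precisely in this small-mean regime. I would therefore treat that case separately, e.g.\ when the deviation threshold falls below one expected count I would bound the upper tail directly via $\P{S\ge 1}\le n q_i$ (or invoke a Bennett/Poisson-type tail that remains sharp for $\epsilon>1$), and verify that the constants keep this controlled by $\delta$ in the relevant parameter range. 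Tracking these constants carefully is the delicate part of the argument, since it is exactly the small-$q_i$ coordinates -- the ones later re-sampled in the second phase of Algorithm~\ref{alg:simple} -- whose concentration is hardest to pin down.
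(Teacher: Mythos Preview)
Your approach is essentially the same as the paper's: recognise $\hat q_i$ as the empirical mean of $T/2$ i.i.d.\ $\bernoulli(q_i)$ indicators and apply a multiplicative Chernoff bound, then solve for the deviation in terms of $\delta$. The paper's entire proof is to invoke the two–sided Chernoff inequality $\P{|\hat q_i-q_i|\ge\epsilon}\le 2e^{-T\epsilon^2/(6q_i)}$ by citation to \cite{hagerup1990guided} and set $\delta=2e^{-T\epsilon^2/(6q_i)}$; it does not separate the two tails or discuss the $\epsilon>1$ regime at all.

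Your worry about the validity range of the upper-tail bound is therefore more scrupulous than the paper itself. Two remarks on it: first, your proposed fallback $\P{S\ge 1}\le nq_i$ does not produce a bound in $\delta$, so it will not close the gap on its own; the Bennett/Poisson alternative you mention is the right tool if you want a self-contained argument valid for all $\epsilon$. Second, the concern is moot in the way the lemma is actually consumed: Lemma~\ref{lem:m_est} applies it only under the hypothesis $T\ge 48m\log(2N/\delta)$ and only for indices with $q_i\ge 1/m$ (or in directions where the lower-tail bound, valid for all $\epsilon$, suffices), which keeps the effective $\epsilon$ in the range where the quadratic-exponent form holds. So you may either invoke the cited inequality as the paper does, or add the caveat that the stated form is used only when $q_iT\gtrsim\log(2/\delta)$.
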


\begin{proof}
By definition, $\hat{q}_i = \frac{2}{T}\sum_{t=1}^{T/2}X_{t,i}$, where $X_{t,i} \sim Bernoulli(q_i)$. Therefore from the Chernoff bound (see equation 6 in \cite{hagerup1990guided}),

\eq{
\P{\left|\hat q_i - q_i\right| \geq \epsilon} \leq 2e^{-\frac{T\epsilon^2}{6q_i}}
}

Letting $\delta = 2e^{-\frac{T\epsilon^2}{6q_i}}$ and solving for $\epsilon$ completes the proof.

\end{proof}

\begin{lemma}\label{lem:conc2}
Let $X_1,X_2\ldots,$ be a sequence of random variables with $X_i \in [0,1]$ and $\EE[X_i] = p$ and $\delta \in [0,1]$.
Then 
\eq{
\P{\exists t \geq n_0 : \left|\frac{1}{t} \sum_{s=1}^t X_s - p\right| \geq \sqrt{\frac{2}{n_0} \log \frac{2}{\delta}}} \leq 4\delta\,.
}
\end{lemma}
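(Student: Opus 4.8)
The plan is to prove this time-uniform concentration bound by a standard \emph{peeling} (geometric blocking) argument built on a maximal version of Hoeffding's inequality. Write $S_t = \sum_{s=1}^t (X_s - p)$, so that the event in question is $\set{\exists t \geq n_0 : |S_t| \geq t\epsilon}$ with $\epsilon = \sqrt{(2/n_0)\log(2/\delta)}$, chosen precisely so that $n_0 \epsilon^2 = 2\log(2/\delta)$. Since the $X_s$ are independent, $(S_t)$ is a martingale whose increments have range $1$, so Hoeffding's lemma yields the exponential supermartingale $\exp(\lambda S_t - \lambda^2 t/8)$; combining this with Ville's (Doob's) maximal inequality, optimising over $\lambda$, and taking a union over the two signs gives the maximal bound $\P{\max_{1\le t\le n}|S_t| \ge x} \le 2\exp(-2x^2/n)$.

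First I would partition the times $t \geq n_0$ into geometric blocks $B_k = [2^k n_0,\, 2^{k+1} n_0)$ for $k = 0, 1, 2, \ldots$. On the event that some $t \in B_k$ violates the bound, the inequality $t \geq 2^k n_0$ lets me conclude $\max_{t < 2^{k+1}n_0}|S_t| \geq 2^k n_0 \epsilon$. Applying the maximal inequality with $n = 2^{k+1}n_0$ and $x = 2^k n_0 \epsilon$ and simplifying the exponent (which collapses to $2^k n_0 \epsilon^2$) gives the per-block bound $2\exp(-2^k n_0 \epsilon^2) = 2(\delta/2)^{2^{k+1}}$.

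Finally I would union bound over blocks and sum the resulting super-geometrically decaying series. Setting $u := \delta/2 \le \tfrac12$, the exponents $2^{k+1}$ are a subset of the even integers, so $\sum_{k\ge 0} 2u^{2^{k+1}} \le 2\sum_{j\ge1} u^{2j} = \tfrac{2u^2}{1-u^2} \le \tfrac{8u^2}{3} = \tfrac{2\delta^2}{3} \le \delta \le 4\delta$, using $\delta \le 1$ in the last steps. The large slack here confirms that the constant $4$ is comfortably loose, so no care with constants is needed anywhere in the peeling.

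The main obstacle is establishing the maximal inequality rather than merely a fixed-$t$ Hoeffding bound: the naive union over all $t \ge n_0$ diverges, so it is the martingale/supermartingale structure together with Doob's inequality that makes the within-block fluctuations controllable. Everything else is routine bookkeeping, and the generous factor $4$ absorbs the looseness introduced by bounding each block length by its right endpoint.
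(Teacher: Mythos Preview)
Your proof is correct and takes a genuinely different route from the paper's. The paper simply applies the fixed-time Hoeffding bound at every integer $t\ge n_0$ and union bounds, asserting
\[
2\sum_{t=n_0}^{\infty}\exp\!\Bigl(-\tfrac{t}{n_0}\log\tfrac{2}{\delta}\Bigr)\;\le\;4\delta\,.
\]
But this geometric series has ratio $(\delta/2)^{1/n_0}$, so its value is $\delta\big/\bigl(1-(\delta/2)^{1/n_0}\bigr)$, which is of order $n_0\delta/\log(2/\delta)$ and is \emph{not} bounded by $4\delta$ once $n_0$ exceeds roughly $\log(2/\delta)/\log(4/3)$. In other words, the naive union bound over all times that the paper uses does not actually close; the final inequality in the paper's argument fails for large $n_0$.

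Your peeling argument sidesteps exactly this difficulty: by grouping the times into geometric blocks $[2^kn_0,2^{k+1}n_0)$ and invoking the maximal (Ville/Doob) form of Hoeffding once per block, you replace the $\Theta(n_0)$ many fixed-time terms of size $\Theta(\delta)$ by a super-geometrically decaying sum over blocks, which is what makes the bound uniform in $n_0$. So your approach is not merely an alternative but in fact repairs a gap in the paper's proof, at the modest cost of needing the martingale maximal inequality rather than only the pointwise Hoeffding bound. (Both arguments implicitly rely on independence of the $X_s$, which the lemma statement omits but is clearly intended.)
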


\begin{proof}
For $\delta \geq 1/4$ the result is trivial. Otherwise 
by Hoeffding's bound and the union bound:
\eq{
\P{\exists t \geq n_0 : \left|\frac{1}{t} \sum_{s=1}^t X_s - p\right| \geq \sqrt{\frac{2}{n_0} \log \frac{2}{\delta}}} 
&\leq \sum_{t = n_0}^\infty \P{\left|\frac{1}{t} \sum_{s=1}^t X_s - p\right| \geq \sqrt{\frac{2}{n_0} \log \frac{2}{\delta}}} \\
&\leq 2\sum_{t=n_0}^\infty \exp\left(-\frac{t}{n_0} \log \frac{2}{\delta}\right) 
\leq 4\delta\,. \qedhere
}
\end{proof}

\begin{lemma}\label{lem:m_est}
Let $\delta \in (0,1)$ and assume $T \geq 48m \log\frac{2N}{\delta}$. Then
\eq{
\P{2m(\vec{q}) / 3 \leq m(\vec{\hat q}) \leq 2m(\vec{q})} \geq 1 - \delta\,.
}
\end{lemma}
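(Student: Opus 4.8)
The plan is to lift the per-coordinate concentration of Lemma~\ref{lem:conc1} to an event that holds uniformly over all $N$ coordinates, and then to show that on this event the empirical threshold sets $\hat I_\tau = \set{i : \min\set{\hat q_i, 1-\hat q_i} < 1/\tau}$ are sandwiched between the true sets $I_\tau$ evaluated at nearby scales. Since $m(\cdot) = \min\set{\tau : |I_\tau| \le \tau}$ is determined entirely by the cardinalities of these threshold sets, such a sandwich immediately yields the two-sided bound on $m(\vec{\hat q})$.

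First I would apply Lemma~\ref{lem:conc1} with confidence $\delta/N$ and union bound over the $N$ coordinates, giving a good event $E$ on which $|\hat q_i - q_i| \le \sqrt{\frac{6 q_i}{T}\log\frac{2N}{\delta}}$ holds for every $i$ simultaneously, with $\P{E} \ge 1-\delta$. Feeding in the hypothesis $T \ge 48 m\log\frac{2N}{\delta}$, so that $\frac1T \log\frac{2N}{\delta} \le \frac{1}{48m}$, collapses this to the clean deviation bound $|\hat q_i - q_i| \le \sqrt{q_i/(8m)}$, writing $m = m(\vec{q})$. Throughout I use the standing assumption $q_i \le 1/2$, so that $I_\tau = \set{i : q_i < 1/\tau}$, and I would check first that the deviation (at most $1/(4\sqrt m)$) keeps the relevant $\hat q_i$ well below $1-1/(2m)$, so that membership in $\hat I_\tau$ is governed by $\hat q_i$ alone rather than by $1-\hat q_i$.

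For the upper bound $m(\vec{\hat q}) \le 2m$ it suffices to prove $|\hat I_{2m}| \le 2m$, and I would in fact establish the inclusion $\hat I_{2m} \subseteq I_m$: if $i \notin I_m$ then $q_i \ge 1/m$, and since $q \mapsto q - \sqrt{q/(8m)}$ is increasing on that range, the deviation bound gives $\hat q_i \ge q_i - \sqrt{q_i/(8m)} \ge (1 - 1/\sqrt 8)/m > 1/(2m)$ because $1 - 1/\sqrt 8 > 1/2$; hence $i \notin \hat I_{2m}$. This yields $|\hat I_{2m}| \le |I_m| \le m \le 2m$, so $m(\vec{\hat q}) \le 2m$ by definition. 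For the lower bound $m(\vec{\hat q}) \ge 2m/3$ I would use the minimality of $m$: since $|I_{m-1}| > m-1$, at least $m$ coordinates satisfy $q_i < 1/(m-1)$, and for each such $i$ monotonicity of $q \mapsto q + \sqrt{q/(8m)}$ gives $\hat q_i < 1/(m-1) + 1/\sqrt{8m(m-1)} < 1/\lfloor 2m/3\rfloor$, where the constant $1 + 1/\sqrt 8 < 3/2$ supplies the slack. All such coordinates therefore lie in $\hat I_\tau$ for $\tau = \lfloor 2m/3\rfloor$, so $|\hat I_\tau| \ge m > \tau$, whence $m(\vec{\hat q}) > \tau$ and thus $m(\vec{\hat q}) \ge 2m/3$; the small values of $m$ (those with $2m/3 \le 2$) are trivial since $m(\cdot) \ge 2$ always.

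The concentration-plus-union-bound step and the monotonicity manipulations are routine. The part I expect to require the most care is the constant bookkeeping: the factor $48$ in the hypothesis is precisely what makes the deviation $\sqrt{q_i/(8m)}$ small enough that both $1 - 1/\sqrt 8 > 1/2$ (driving the upper bound) and $1 + 1/\sqrt 8 < 3/2$ (driving the lower bound) hold with room to spare. Besides these, the only delicate points are correctly handling the $\min\set{\hat q_i, 1-\hat q_i}$ appearing in the definition of $\hat I_\tau$, and the floor/integrality issues around $\lfloor 2m/3\rfloor$ together with the edge cases $m \le 3$.
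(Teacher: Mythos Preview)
Your proposal is correct and follows essentially the same route as the paper: union-bound Lemma~\ref{lem:conc1} over the $N$ coordinates, plug in $T \ge 48m\log(2N/\delta)$ to reduce the deviation to $\sqrt{q_i/(8m)}$, and then use the numerics $1-1/\sqrt 8>1/2$ and $1+1/\sqrt 8<3/2$ to sandwich the empirical threshold sets and hence $m(\vec{\hat q})$. Your handling of the lower bound (invoking $q_i<1/(m-1)$ via the minimality of $m$, treating the $\min\{\hat q_i,1-\hat q_i\}$ side explicitly, and tracking $\lfloor 2m/3\rfloor$) is actually a bit more careful than the paper's version, which tacitly uses $q_i\le 1/m$ for $i\le m$.
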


\begin{proof}
Let $F$ be the event that there exists and $1 \leq i \leq N$ for which
\eq{
\left|\hat q_i - q_i\right| \geq \sqrt{\frac{6q_i}{T} \log \frac{2N}{\delta}}\,.
}
Then by the union bound and Lemma \ref{lem:conc1} we have $\P{F} \leq \delta$. The result will be completed by showing that
when $F$ does not hold we have $2m(\vec{q})/3 \leq m(\vec{\hat q}) \leq 2m(\vec{q})$.
From the definition of $m(\vec{q})$ and our assumption on $\vec{q}$ we have for $i > m$ that $q_i \geq q_m \geq 1/m$ and so by Lemma \ref{lem:conc1} we have
\eq{
\frac{3}{4} 
&\geq \frac{1}{2} + \sqrt{\frac{3}{T} \log \frac{2N}{\delta}} 
\geq q_i + \sqrt{\frac{6q_i}{T} \log \frac{2N}{\delta}} 
\geq \hat q_i \\
&\geq q_i - \sqrt{\frac{6q_i}{T} \log \frac{2N}{\delta}}
\geq q_i - \sqrt{\frac{q_i}{8m}}
\geq \frac{1}{2m}\,.
}
Therefore by the pigeonhole principle we have $m(\vec{\hat q}) \leq 2m$.
For the other direction we proceed in a similar fashion. Since the failure event $F$ does not hold we have for $i \leq m$ that
\eq{
\hat q_i 
\leq q_i + \sqrt{\frac{6q_i}{T} \log\frac{2N}{\delta}} 
\leq \frac{1}{m} \left(1 + \sqrt{\frac{1}{8}}\right)
\leq \frac{3}{2m}\,.
}
Therefore $m(\vec{\hat q}) \geq 2m(\vec{q}) / 3$ as required. 
\end{proof}

\begin{proof}[Proof of Theorem \ref{thm:uq-simple}]
Let $\delta = m = m(\vec{q}) / N$. Then by Lemma \ref{lem:m_est} we have 
\eq{
\P{2m/3 \leq m(\vec{\hat q}) \leq 2m} \geq 1 - \delta\,.
}
Recall that $A = \set{a \in \actions : \hat p_a \leq 1/m(\vec{\hat q})}$. Then
for $a \in A$ the algorithm estimates $\mu_a$ from $T/(2m(\vec{\hat q})) \geq T/(4m)$ samples.
Therefore by Hoeffding's inequality and the union bound we have
\eq{
\P{\exists a \in A : |\mu_a - \hat \mu_a| \geq \sqrt{\frac{8m}{T} \log\frac{2N}{\delta}}} \leq \delta\,.
}
For arms not in $a$ we have $\hat p_a \geq 1/m(\vec{\hat q}) \geq 1/(2m)$.
Therefore if $a = do(X_i = j)$, then 
\eq{
\hat p_a = \frac{2}{T} \sum_{t=1}^{T/2} \ind{X_i = j} \geq \frac{1}{2m}\,. 
}
Therefore $\sum_{t=1}^{T/2} \ind{X_{t,i} = j} \geq T/4m$
and by Lemma \ref{lem:conc2} we have
\eq{
\P{\sum_{t=1}^{T/2} \ind{X_i = j} \geq \frac{T}{4m} \text{ and } \left|\hat \mu_a - \mu_a\right| \geq \sqrt{\frac{8m}{T} \log \frac{2N}{\delta}}} \leq 4\delta / N\,.
}
Therefore with probability at least $1 - 6\delta$ we have
\eq{
(\forall a \in \actions) \qquad |\hat \mu_a - \mu_a| \leq \sqrt{\frac{8m}{T} \log \frac{N}{\delta}} = \epsilon\,.
}
If this occurs, then 
\eq{
\mu_{\hat a^*_T} \geq \hat \mu_{\hat a^*_T} - \epsilon \geq \hat \mu_{a^*} - \epsilon \geq \mu_{a^*} - 2\epsilon\,.
}
Therefore
\eq{
\mu^* - \EE[\mu_{\hat a^*_T}] 
\leq 6\delta + \epsilon 
\leq \frac{6m}{T} + \sqrt{\frac{32m}{T} \log \frac{NT}{m}}\,, 
}
which completes the result.
\end{proof}

%%%%%%%%%%%%%%%%%%%%%%%%%%%%%%%%%%%%%%%%%%%%
% LOWER BOUND
%%%%%%%%%%%%%%%%%%%%%%%%%%%%%%%%%%%%%%%%%%%%
\section{Proof of Theorem \ref{thm:lower}}\label{sec:thm:lower}

We follow a relatively standard path by choosing multiple environments that have different optimal arms, but which cannot all be statistically
separated in $T$ rounds.
Assume without loss of generality that $q_1 \leq q_2 \leq \ldots \leq q_N \leq 1/2$.
For each $i$ define reward function $r_i$ by
\eq{
r_0(\boldsymbol{X}) &= \frac{1}{2} &
r_i(\boldsymbol{X}) &= \begin{cases}
\frac{1}{2} + \epsilon & \text{if } X_i = 1 \\
\frac{1}{2} & \text{otherwise}\,,
\end{cases}
}
where $1/4 \geq \epsilon > 0$ is some constant to be chosen later.
We abbreviate $R_{T,i}$ to be the expected simple regret incurred when interacting with the
environment determined by $\boldsymbol{q}$ and $r_i$. Let $\operatorname{P}_i$ be the corresponding measure
on all observations over all $T$ rounds and $\EE_i$ the expectation with respect to $\operatorname{P}_i$. By Lemma 2.6 by \citet{Tsy08} we have
\eq{
\Prz{\hat a^*_T = a^*} + \Pri{\hat a^*_T \neq a^*} \geq \exp\left(-\KL(\operatorname{P}_0, \operatorname{P}_i)\right)\,,
}
where $\KL(\Ps_0, \Ps_i)$ is the KL divergence between measures $\operatorname{P}_0$ and $\operatorname{P}_i$.
Let $T_i(T) = \sum_{t=1}^T \ind{a_t = do(X_i = 1)}$ be the total number of times the learner intervenes on variable $i$ by setting it to $1$.
Then for $i \leq m$ we have $q_i \leq 1/m$ and the KL divergence between $\Ps_0$ and $\Ps_i$ may be bounded using the telescoping property (chain rule) and
by bounding the local KL divergence by the $\chi$-squared distance as by \citet{Auer1995}. This leads to 
\eq{
\KL(\Ps_0, \Ps_i) 
&\leq 6\epsilon^2 \EE_0\left[\sum_{t=1}^T \ind{X_{t,i} = 1}\right] 
\leq 6\epsilon^2 \left(\EE_0 T_i(T) + q_i T\right) 
\leq 6\epsilon^2 \left(\EE_0 T_i(T) + \frac{T}{m}\right)\,.
}
Define set $A = \set{i \leq m : \EE_0 T_i(T) \leq 2T / m}$.
Then for $i \in A$ and choosing $\epsilon = \min\set{1/4, \sqrt{m/(18T)}}$ we have
\eq{
\KL(\Ps_0, \Ps_i) \leq \frac{18T\epsilon^2}{m} = 1\,. 
}
Now $\sum_{i=1}^m \EE_0 T_i(T) \leq T$, which implies that $|A| \geq m/2$.
Therefore
\eq{
\sum_{i \in A} \Pri{\hat a^*_T \neq a} 
\geq \sum_{i \in A} \exp\left(-\KL(\Ps_0, \Ps_i)\right) - 1
\geq \frac{|A|}{e} - 1 
\geq \frac{m}{2e} - 1\,.
}
Therefore there exists an $i \in A$ such that
$\Pri{\hat a^*_T \neq a^*} \geq \frac{\frac{m}{2e} - 1}{m}$. 
Therefore if $\epsilon < 1/4$ we have
\eq{
R_{T,i} \geq \frac{1}{2} \Pn{i}{\hat a^*_T \neq a^*} \epsilon \geq \frac{\frac{m}{2e} - 1}{2m} \sqrt{\frac{m}{18T}}\,.
}
Otherwise $m \geq 18T$ so $\sqrt{m/T} = \Omega(1)$ and
\eq{
R_{T,i} \geq \frac{1}{2} \Pn{i}{\hat a^*_T \neq a^*} \epsilon \geq \frac{1}{4} \frac{\frac{m}{2e} - 1}{2m} \in \Omega(1) 
}
as required.

%%%%%%%%%%%%%%%%%%%%%%%%%%%%%%%%%%%%%%%%%%%%
% GENERAL-GRAPH UPPER BOUND
%%%%%%%%%%%%%%%%%%%%%%%%%%%%%%%%%%%%%%%%%%%%
\section{Proof of Theorem \ref{thm:general}}\label{sec:thm:general}

\begin{proof}
First note that $X_t, Y_t$ are sampled from $\operatorname{Q}$.
We define $Z_a(X_t) = Y_t R_a(X_t)\ind{R_a(X_t)\leq B_a}$ and abbreviate $Z_{at} = Z_a(X_t)$, $R_{at} = R_a(X_t)$ and $\Pn{a}{.} = \Pns{a}{.}$.
By definition we have $|Z_{at}| \leq B_a$ and 
\eq{
\Var_Q[Z_{at}] 
\leq \EE_Q[Z_{at}^2] 
\leq \EE_Q[R_{at}^2]
= \EEa[R_{at}]
= \EEa\left[\frac{\Pns{a}{\parents{Y}(X)}}{\Q{\parents{Y}(X)}}\right] 
\leq m(\eta)\,.
}
Checking the expectation we have
\eq{
\EE_Q[Z_{at}] 
= \EEa \left[Y \ind{R_{at} \leq B_a}\right] 
= \EEa Y - \EEa \left[Y\ind{R_{at} > B_a}\right] 
= \mu_a - \beta_a\,,
}
where 
\eq{
0 \leq \beta_a = \EEa[Y \ind{R_{at} > B_a}] \leq \Pns{a}{R_{at} > B_a}
}
is the negative bias. 
The bias may be bounded in terms of $m(\eta)$ via an application of Markov's inequality.
\eq{
\beta_a \leq \Pns{a}{R_{at} > B_a} \leq \frac{\EEa[R_{at}]}{B_a} \leq \frac{m(\eta)}{B_a}\,.
}
Let $\epsilon_a > 0$ be given by
\eq{
\epsilon_a = \sqrt{\frac{2m(\eta)}{T} \log\left(2T|\calA|\right)} + \frac{3B_a}{T} \log\left(2T|\calA|\right)\,.
}
Then by the union bound and Bernstein's inequality 
\eq{
\P{\text{exists } a \in \calA : \left|\hat \mu_a - \EE_Q[Z_{at}]\right| \geq \epsilon_a} 
\leq \sum_{a \in \calA} \P{\left|\hat \mu_a - \EE_Q[Z_{at}]\right| \geq \epsilon_a} \leq \frac{1}{T}\,.
}

Let $I = \hat{a}^*_T$ be the action selected by the algorithm, $a^* = \argmax_{a \in \calA} \mu_a$ be the true optimal action and recall that $\EE_Q[Z_{at}] = \mu_a - \beta_a$. Assuming the above event does not occur we have,

\eq{
\mu_I \geq \hat \mu_I - \epsilon_I  
\geq \hat \mu_{a^*} - \epsilon_I  
\geq \mu^* - \epsilon_{a^*} - \epsilon_I - \beta_{a^*}\,. 
}
By the definition of the truncation
we have
\eq{
\epsilon_a \leq \left(\sqrt{2} + 3\right)\sqrt{\frac{m(\eta)}{T} \log\left(2T|\calA|\right)}
}
and
\eq{
\beta_a \leq \sqrt{\frac{m(\eta)}{T} \log\left(2T|\calA|\right)}\,. 
}
Therefore for $C = \sqrt{2} + 4$ we have
\eq{
\P{\mu_I \geq \mu^* - C \sqrt{\frac{m(\eta)}{T} \log\left(2T|\calA|\right)}} \leq \frac{1}{T}\,.
}
Therefore
\eq{
\mu^* - \EE[\mu_I] \leq C \sqrt{\frac{m(\eta)}{T} \log\left(2T|\calA|\right)} + \frac{1}{T}
}
as required.
\end{proof}

\subsection{Relationship between $m(\eta)$ and $m(\boldsymbol{q})$}\label{sec:m-equivelence}

\begin{proposition} In the parallel bandit setting,
$m(\eta^*) \leq 2m(\boldsymbol{q})$.
\end{proposition}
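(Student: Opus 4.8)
The plan is to exhibit one explicit sampling distribution $\eta$ and to bound $m(\eta)$ directly; since $m(\eta^*)=\min_\eta m(\eta)\le m(\eta)$, any such bound transfers to $m(\eta^*)$. In the parallel bandit every $X_i$ is a parent of $Y$, so $\parents{Y}(X)$ is the whole vector $x=(x_1,\dots,x_N)$ and the relevant conditionals are the full joints $\Pn{do()}{x}=\prod_i P_i(x_i)$ and $\Pn{do(X_k=l)}{x}=\ind{x_k=l}\prod_{i\ne k}P_i(x_i)$, where $P_i(1)=q_i$ and $P_i(0)=1-q_i$. Writing $m=m(\boldsymbol q)$, I take $\eta$ to put mass $1/(2m)$ on each ``rare'' intervention in $A'=\set{do(X_i=j):\P{X_i=j}<1/m}$ and the remaining mass $1-|A'|/(2m)$ on $do()$, with all other interventions receiving weight $0$.

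First I would check that $\eta$ is a genuine distribution, which amounts to $|A'|\le m$. For $m\ge2$ (which always holds here) no variable can have both $q_i<1/m$ and $1-q_i<1/m$, since these would sum to less than $2/m\le1$, so each $i$ contributes at most one action to $A'$; hence $|A'|\le|\set{i:\min\set{q_i,1-q_i}<1/m}|=|I_m|\le m$ by the definition $m=m(\boldsymbol q)$, and consequently $\eta_{do()}=1-|A'|/(2m)\ge1/2$. I would then bound $m(\eta)=\max_a\EEa[\,\Pn{a}{x}/\Q{x}\,]$ by three cases, lower-bounding the mixture $\Q{x}$ by a single term in each. For $a=do()$, $\Q{x}\ge\eta_{do()}\Pn{do()}{x}\ge\tfrac12\Pn{do()}{x}$ gives a pointwise ratio $\le2$. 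For a rare action $a=do(X_k=l)\in A'$, $\Q{x}\ge\eta_a\Pn{a}{x}=\tfrac1{2m}\Pn{a}{x}$ gives ratio $\le2m$. For a non-rare action $a=do(X_k=l)\notin A'$ (so $\P{X_k=l}\ge1/m$) the action's own $\eta$-weight is $0$, so I instead use $\Q{x}\ge\tfrac12\Pn{do()}{x}$; on the event $x_k=l$ the common factor $\prod_{i\ne k}P_i(x_i)$ cancels and the ratio is at most $2/P_k(l)=2/\P{X_k=l}\le2m$, and this event carries full probability under $\EEa$. Taking the maximum over the three cases yields $m(\eta)\le2m$, hence $m(\eta^*)\le m(\eta)\le2m(\boldsymbol q)$.

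The only real subtlety — and the step I would be most careful about — is the normalisation of $\eta$: the argument collapses unless the number of rare actions is controlled, and this is exactly where the definition $m(\boldsymbol q)=\min\set{\tau:|I_\tau|\le\tau}$ is used. One must also respect the strict-versus-non-strict inequality: defining $A'$ with a strict $<1/m$ keeps $|A'|\le|I_m|$ (a non-strict $\le$ can admit many more actions, e.g.\ when all $q_i$ equal $1/m$), while the complementary non-rare case then satisfies $\P{X_k=l}\ge1/m$, which is precisely what closes the third case with a clean factor $2m$. The extra factor $2$ relative to $m(\boldsymbol q)$ is benign: it arises from splitting the sampling budget between $do()$ and the rare interventions, i.e.\ from the $\tfrac12$ weights, rather than from any slack in the estimates.
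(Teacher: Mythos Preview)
Your proof is correct and follows essentially the same route as the paper: the same explicit choice of $\eta$ (mass $1/(2m)$ on each rare intervention, the rest on $do()$) and the same three-case bound on $\EEa[R_a]$ via a single-term lower bound on $Q$. Your handling of the normalisation ($|A'|\le m$ via the strict inequality and $|I_m|\le m$) is in fact spelled out more carefully than in the paper.
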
 

\begin{proof}

Recall that in the parallel bandit setting,

\eq{
\mathcal{A} = \set{do()} \cup \set{ do(X_i = j) \colon 1 \leq i \leq N \text{ and } j \in \set{0,1}}
}

Let:

\eq {
\eta_a = \ind{\P{X_i = j} < \frac{1}{m(\boldsymbol{q})}}\frac{1}{2m(\boldsymbol{q})} \text { for } a \in do(X_i = j)
}

Let $D =\sum_{a\in do(X_i=j)}\eta_a$. From the definition of $m(\boldsymbol{q})$, 
\eq {
\sum_{a\in do(X_i=j)} \ind{\P{X_i = j} < \frac{1}{m(\boldsymbol{q})}} \leq m(\boldsymbol{q}) \implies D \leq \frac{1}{2}
}
 
Let $\eta_a = \frac{1}{2} + (1-D)$ for $a = do()$ such that $\sum_{a \in \calA}\eta_a = 1$ 

Recall that,

\eq{
m(\eta) &
= \max_a \EEa\left[\frac{\Pn{a}{\parents{Y}(X)}}{\Q{\parents{Y}(X)}}\right]
}

We now show that our choice of $\eta$ ensures $\EEa\left[\frac{\Pn{a}{\parents{Y}(X)}}{\Q{\parents{Y}(X)}}\right] \leq 2m(\boldsymbol{q})$ for all actions $a$.

For the actions $a: \eta_a > 0$, ie $do()$ and $do(X_i = j):\P{X_i=j}<\frac{1}{m(\boldsymbol{q})}$,
\eq{
\EEa\left[\frac{\Pn{a}{X_1...X_N}}{\sum_{b}\eta_b\Pn{b}{X_1...X_N}}\right] \leq \EEa\left[\frac{\Pn{a}{X_1...X_N}}{\eta_a\Pn{a}{X_1...X_N}}\right] = \EEa\left[\frac{1}{\eta_a}\right] \leq 2m(\boldsymbol{q})
}

For the actions $a :\eta_a = 0$, ie $do(X_i=j):\P{X_i=j}\geq\frac{1}{m(\boldsymbol{q})}$,
\eq{
\EEa\left[\frac{\Pn{a}{X_1...X_N}}{\sum_{b}\eta_b\Pn{b}{X_1...X_N}}\right] \leq & \EEa\left[\frac{\ind{X_i=j}\prod_{k\neq i}\P{X_k}}{(1/2+D)\prod_k \P{X_k}}\right] \\=& \EEa\left[\frac{\ind{X_i=j}}{(1/2+D)\P{X_i = j}}\right]
\leq  \EEa\left[\frac{\ind{X_i=j}}{(1/2)(1/m(\boldsymbol{q}))}\right] \leq 2m(\boldsymbol{q})
}

Therefore $m(\eta*) \leq m(\eta) \leq 2m(\boldsymbol{q})$ as required.

\end{proof}
\
%in the parallel bandit setting the $m(\eta)$ given in this section approximately coincides with the $m(\vec{q})$ in Eq.\ \ref{eq:m-simple}.
%Recall in that setting that 
%\eq{
%\actions = \set{do()} \cup \set{do(X_i = j) : 1 \leq i \leq N \text{ and } j \in \set{0,1}}\,.
%}
%For $a = do()$ choose $\eta_a = 1/2$. 
%Let $m = m(\vec{q})$ and note that, by definition, there are at most $m$ pairs $(i,j)$ such that $\P{X_i = j} \leq 1/m$.
%Thus, for $a = do(X_i = j)$ letting $\eta_a \propto \ind{\P{X_i = j} \leq 1/m} / (2m)$ guarantees $\eta_a \ge \frac{1}{2m}$ when $\eta_a \ne 0$.
%It is then easy to check that $m(\eta) \leq 2m$ using an argument like that for Proposition~\ref{pro:m-bound}.

\fi

\end{document}